\newif\ifreview 
\newif\ifarxiv \newcommand{\arxiv}{\arxivtrue}
\newif\ifcamera 
\newif\ifrebuttal 
\ifreview \usepackage[review]{cvpr} \fi
\ifarxiv \usepackage[pagenumbers]{cvpr} \fi
\ifrebuttal \usepackage[rebuttal]{cvpr} \fi
\ifcamera \usepackage{cvpr} \fi
\ifcamera \usepackage[accsupp]{axessibility} \fi
\newcommand{\R}[1]{{%
    \textbf{%
        \ifstrequal{#1}{1}{\textcolor{red}{R#1}}{%
        \ifstrequal{#1}{2}{\textcolor{blue}{R#1}}{%
        \ifstrequal{#1}{3}{\textcolor{magenta}{R#1}}{%
        \ifstrequal{#1}{4}{\textcolor{teal}{R#1}}{%
                           \textcolor{cyan}{R#1}%
        }}}}%
    }%
}}
\newcommand*{\addFileDependency}[1]{
  \typeout{(#1)}
  \@addtofilelist{#1}
  \IfFileExists{#1}{}{\typeout{No file #1.}}
}
\definecolor{cvprblue}{rgb}{0.21,0.49,0.74}
\crefname{section}{Sec.}{Secs.}
\crefname{table}{Table}{Tables}
\crefname{figure}{Fig.}{Figs.}
\newtheorem{theorem}{Theorem}[section]
\newcommand*{\rom}[1]{\expandafter{\romannumeral #1}}
\title{ACT-Diffusion: Efficient Adversarial Consistency Training for One-step Diffusion Models}
\author{
   {Fei Kong\textsuperscript{1} \quad Jinhao Duan\textsuperscript{2} \quad Lichao Sun\textsuperscript{3} \quad Hao Cheng\textsuperscript{4} \quad Renjing Xu\textsuperscript{4} }\\
  {Hengtao Shen\textsuperscript{1} \quad Xiaofeng Zhu\textsuperscript{1} \quad Xiaoshuang Shi\textsuperscript{1}\thanks{Equal corresponding author} \quad Kaidi Xu\textsuperscript{2\thinspace$*$}} \vspace{3pt}\\
  \textsuperscript{1}University of Electronic Science and Technology of China \\
  \textsuperscript{2}Drexel University \\
  \textsuperscript{3}Lehigh University\\
  \textsuperscript{4}The Hong Kong University of Science and Technology (Guangzhou)
  \vspace{3pt}\\
  {\tt\small kong13661@outlook.com \quad xsshi2013@gmail.com \quad kx46@drexel.edu}
}
\begin{document}
\maketitle

 \begin{abstract}

Though diffusion models excel in image generation, their step-by-step denoising leads to slow generation speeds. Consistency training addresses this issue with single-step sampling but often produces lower-quality generations and requires high training costs. In this paper, we show that optimizing consistency training loss minimizes the Wasserstein distance between target and generated distributions. As timestep increases, the upper bound accumulates previous consistency training losses. Therefore, larger batch sizes are needed to reduce both current and accumulated losses. We propose Adversarial Consistency Training (ACT), which directly minimizes the Jensen-Shannon (JS) divergence between distributions at each timestep using a discriminator. Theoretically, ACT enhances generation quality, and convergence. By incorporating a  discriminator into the consistency training framework, our method achieves improved FID scores on CIFAR10 and ImageNet 64$\times$64 and LSUN Cat 256$\times$256 datasets, retains zero-shot image inpainting capabilities, and uses less than $1/6$ of the original batch size and fewer than $1/2$ of the model parameters and training steps compared to the baseline method, this leads to a substantial reduction in resource consumption. Our code is available: \url{https://github.com/kong13661/ACT}

\end{abstract}

\section{Introduction}

Diffusion models, known for their success in image  generation~\cite{ho2020denoising, Score_Based,Estimating_Gradients,yuan2023remind,SecMI,liu2024sora}, utilize diffusion processes to produce high-quality, diverse images. They also perform tasks like zero-shot inpainting \cite{lugmayr2022repaint} and audio generation \cite{Popov2021Grad-TTS,Kong2021DiffWave,kong2023efficient}. However, they have a significant drawback: lengthy sampling times. These models generate target distribution samples by iterative denoising a Gaussian noise input, a process that involves gradual noise reduction until samples match the target distribution. This limitation affects their practicality and efficiency in real-world applications.

The lengthy sampling times of diffusion models have spurred the creation of various strategies to tackle this issue. Several models and techniques have been suggested to enhance the efficiency of diffusion-based image generation \cite{Chen2019Residual, Liu2022Flow, Zheng2023Fast}. Recently, consistency models \cite{Song2023Consistency} have been introduced to speed up the diffusion models' sampling process. A consistency function is one that consistently yields the same output along a specific trajectory. To use consistency models, the trajectory from noise to the target sample must be obtained. By fitting the consistency function, the model can generate data within 1 or 2 steps.

The score-based model \cite{Score_Based}, an extension of the diffusion model in continuous time, gradually samples from a normal distribution $p_T$ to the sample distribution $p_0$. In deterministic sampling, it essentially solves an Ordinary Differential Equation (ODE), with each sample representing an ODE trajectory. Consistency models generate samples using a consistency function that aligns every point on the ODE trajectory with the ODE endpoint. However, deriving the true ODE trajectory is complex. To tackle this, consistency models suggest two methods. The first, consistency distillation, trains a score-based model to obtain the ODE trajectory. The second, consistency training, approximates the trajectory using a conditional one. Compared to distillation, consistency training has a larger error, leading to lower sample quality. The consistency function is trained by equating the model's output at time $t_{n+1}$ with its output at time $t_{n}$.

Generative Adversarial Networks (GANs) \cite{Brock2019Large, Zhang2020Consistency, goodfellow2014generative}, unlike consistency training, can directly minimize the distance between the model's generated and target distributions via the discriminator, independent of the model's output at previous time $t_{n-1}$. Drawing from GANs, we introduce Adversarial Consistency Training. We first theoretically explain the need for large batch sizes in consistency training by showing its equivalence to optimizing the upper bound of the Wasserstein-distance between the model's generated and target distributions. This upper bound consists of the accumulated consistency training loss $\mathcal L^{t_k}_{CT}$, the distance between sampling distributions, and the accumulated error, all of which increase with $t$. Hence, a large batch size is crucial to minimize the error from the previous time $t$. To mitigate the impact of $\mathcal L^{t_k}_{CT}$ and accumulated error, we incorporate the discriminator into consistency training, enabling direct reduction of the JS-divergence between the generated and target distributions at each timestep $t$. Our experiments on CIFAR10 \cite{cifar10}, ImageNet 64$\times$64 \cite{deng2009imagenet} and LSUN Cat 256$\times$256 \cite{yu2015lsun} show that ACT significantly surpasses consistency training while needing less than $1/6$ of the original batch size and less than $1/2$ of the original model parameters and training steps, leading to considerable resource savings. For comparison, we use 1 NVIDIA GeForce RTX 3090 for CIFAR10, 4 NVIDIA A100 GPUs for ImageNet 64$\times$64 and 8 NVIDIA A100 GPUs for LSUN Cat 256$\times$256, while consistency training requires 8, 64, 64 A100 GPUs for CIFAR10, ImageNet 64$\times$64 and LSUN Cat 256$\times$256, respectively.

Our contributions are summarized as follows: 
\begin{itemize}
    \item We demonstrate that consistency training is equivalent to optimizing the upper bound of the W-distance. 
    By analyzing this upper bound, we have identified one reason why consistency training requires a larger batch size.
    
    \item Following our analysis, we propose Adversarial Consistency Training (ACT) to directly optimize the JS divergence between the sampling distribution and the target distribution at each timestep $t$, by incorporating a discriminator into the consistency training process.

    \item Experimental results demonstrate that the proposed ACT significantly outperforms the original consistency training with only less than $1/6$ of the original batch size and less than $1/2$ of the training steps. This leads to a substantial reduction in resource consumption.
    
\end{itemize}

\section{Related works}

\textbf{Generative Adversarial Networks} ~ GANs have achieved tremendous success in various domains, including image generation~\cite{goodfellow2014generative,yuan2020attribute,yuan2023dde} and audio synthesis~\cite{donahue2018adversarial}. However, GAN training faces challenges such as instability and mode collapse, where the generator fails to capture the diversity of the training data. To address these issues, several methods have been proposed. For example, spectral normalization, gradient penalty, and differentiable data augmentation techniques have been developed. Spectral normalization \cite{Miyato2018Spectral} constrains the Lipschitz constant of the discriminator, promoting more stable training. Gradient penalty, as employed in the WGAN-GP \cite{Ishaan2017Improved}, utilizes the gradient penalty to discriminator to limit the range of gradient, so as to avoid the tend of concentrating the weights around extreme values, when using weight clipping in WGAN \cite{arjovsky2017wasserstein}. \cite{Thanh-Tung2019Improving} introduces the concept of zero centered gradient penalty, and StyleGAN2 \cite{Tero2020Analyzing} introduces lazy regularization which performs multiple steps of iteration before computing the gradient penalty to improve the efficiency. Moreover, differentiable data augmentation techniques \cite{Zhao2020Differentiable} have been introduced to enhance the diversity and robustness of GAN models during training. StyleGAN2-ADA \cite{Tero2020Training} improves GAN performance on small datasets by employing adaptive differentiable data augmentation techniques.

\textbf{Diffusion Models} ~ Diffusion models have emerged as highly successful approaches for generating images \cite{Ramesh2022Hierarchical,stable_diffusion}. In contrast to the traditional approach of Generative Adversarial Networks (GANs), which involve a generator and a discriminator, diffusion models generate samples by modeling the inverse process of a diffusion process from Gaussian noise. Diffusion models have shown superior stable training process compared to GANs, effectively addressing issues such as checkerboard artifacts \cite{salimans2016improved, Donahue2017Adversarial,Dumoulin2017Adversarially}.
The diffusion process is defined as follows:
$\boldsymbol{x}_{t}=\sqrt{\alpha_{t}}\boldsymbol{x}_{t-1}+\sqrt{\beta_{t}} \boldsymbol\epsilon_{t},  \boldsymbol\epsilon_{t} \sim \mathcal{N}(\mathbf{0}, \mathbf{I}) $. As $t$ increases, $\beta_t$ gradually increases, causing $\boldsymbol x_t$ to approximate random Gaussian noise. In the reverse diffusion process, $\boldsymbol{x}'_t$ follows a Gaussian distribution, assuming the same variance as in the forward diffusion process. The mean of $\boldsymbol{x}'_t$ is defined as:
$\tilde{\boldsymbol \mu}_{t}=\frac{1}{\sqrt{a_{t}}}\left(\boldsymbol x_{t}-\frac{\beta_{t}}{\sqrt{1-\bar{a}_{t}}} \bar{\boldsymbol{\epsilon}}_{\theta}(\boldsymbol x_t,t)\right)$, 
where $\bar\alpha_t=\prod_{k=0}^t \alpha_k$ and $\bar\alpha_t+\bar\beta_t=1$. The reverse diffusion process becomes:
$\boldsymbol x_{t-1}=\tilde{\boldsymbol \mu}_{t}+\sqrt{\beta_t}\boldsymbol\epsilon,\boldsymbol\epsilon\sim \mathcal{N}(\mathbf{0}, \mathbf{I})$. 
The loss function is defined as $\mathbb{E}_{x_{0}, \bar{\boldsymbol \epsilon}_{t}}\left[\left\|\bar{\boldsymbol \epsilon}_{t}-\boldsymbol \epsilon_{\theta}\left(\sqrt{\bar{\alpha}_{t}} x_{0}+\sqrt{1-\bar{\alpha}_{t}} \bar{\boldsymbol \epsilon}_{t}, t\right)\right\|^{2}\right]. $ 
Score-based models \cite{Score_Based} transforms the discrete-time diffusion process into a continuous-time process and employs Stochastic Differential Equations (SDEs) to express the diffusion process. Moreover, the forward and backward processes are no longer restricted to the diffusion process. They employ the forward process defined as $ d \boldsymbol{x}=\left(\boldsymbol{f}_{t}(\boldsymbol{x})-\frac{1}{2}\left(g_{t}^{2}-\sigma_{t}^{2}\right) \nabla_{\boldsymbol{x}} \log p_{t}(\boldsymbol{x})\right) d t+\sigma_{t} d \boldsymbol{w} $, and the corresponding backward process is $ d \boldsymbol{x}=\left(\boldsymbol{f}_{t}(\boldsymbol{x})-\frac{1}{2}\left(g_{t}^{2}+\sigma_{t}^{2}\right) \nabla_{\boldsymbol{x}} \log p_{t}(\boldsymbol{x})\right) d t+\sigma_{t} d \boldsymbol{\bar w} $, where $\boldsymbol w$ is the forward time Brownian motion and $\boldsymbol {\bar w}$ is the forward time Brownian motion. Compared to GANs, diffusion models have longer sampling time consummations. Several methods have been proposed to accelerate the generation process, including \cite{Salimans2022Progressive,Dockhorn2022Score-Based,Xiao2022Tackling}, DDIM \cite{DDIM}, Consistency models \cite{Song2023Consistency}, etc.

\textbf{Consistency type models} ~ A function is called a consistency function if its output is the same at every point on a trajectory. Formally, given a trajectory, $\boldsymbol x_t, t\in [0, T]$, the function satisfies $f(\boldsymbol x_{t_1})=\mathbb E[f(\boldsymbol x_{t_2})]$, if $t_1, t_2\in [0,T]$. If this trajectory is not a probability trajectory, then the expected symbol $\mathbb E$ in the above formula can be removed. \cite{daras2023consistent} proposed Consistency Diffusion Models (CDM), which proves that when the forward diffusion process satisfies $d \boldsymbol 
x_t = g(t) d\boldsymbol w_t$, $\boldsymbol h(\boldsymbol x,t)=\nabla \log q_t(\boldsymbol x) g^2(t)+\boldsymbol x$ is a consistency function. They add consistency regularity above during training to improve the sampling effectiveness of the model. \cite{Song2023Consistency} proposed consistency models. Unlike consistency diffusion models, Consistency Models (CM) utilize deterministic sampling to obtain a one-step sampling model by learning the mapping from each point $\boldsymbol x_t$ on the trajectory to $\boldsymbol x_0$. When training a diffusion model to obtain the trajectory $\boldsymbol x_t$, it is called consistency distillation. When using conditional-trajectories to approximate non-conditional trajectories, it is called consistency training. Compared to consistency distillation, consistency training has a lower sampling effectiveness. Concurrently, \cite{kim2023consistency} induces a new temporal variable, while calculating the previous step's $x$ through multi-step iteration, and incorporates a discriminator after a period of training and achieved SOTA results in distillation. Our work concentrates on energy-efficient training from scratch also with different objective functions.

\section{Method}

\subsection{Preliminary}
\subsubsection{Score-Based Generative Models}
\label{score_based}

Score-Based Generative Models \cite{Score_Based}, as an extension of diffusion models, extends the diffusion to continuous time, and the forward and backward processes are no longer limited to the diffusion process. Given a distribution $p_t$, where $t\in[0,T]$, $p_0$ is the data distribution and $p_T$ is normal distribution. From $p_0$ to $p_T$, this distribution increasingly approximates a normal distribution. We sample $\boldsymbol x_t$ from $p_t$ distribution. If we can obtain $\boldsymbol x_{t'}$ from the formula $ d \boldsymbol{x}=\left(\boldsymbol{f}_{t}(\boldsymbol{x})-\frac{1}{2}\left(g_{t}^{2}-\sigma_{t}^{2}\right) \nabla_{\boldsymbol{x}} \log p_{t}(\boldsymbol{x})\right) d t+\sigma_{t} d \boldsymbol{w} $, where $\boldsymbol w$ is the forward time Brownian motion and $t'>t$, then we can obtain $\boldsymbol x_{t'}$ from the formula $ d \boldsymbol{x}=\left(\boldsymbol{f}_{t}(\boldsymbol{x})-\frac{1}{2}\left(g_{t}^{2}+\sigma_{t}^{2}\right) \nabla_{\boldsymbol{x}} \log p_{t}(\boldsymbol{x})\right) d t+\sigma_{t} d \boldsymbol{w}$, where $\boldsymbol w$ is the backward time Brownian motion and $t'<t$. If $\sigma_t=0$, this formula turns into a ordinary differential equation $d \boldsymbol{x}=\left(\boldsymbol{f}_{t}(\boldsymbol{x})-\frac{1}{2}g_{t}^{2} \nabla_{\boldsymbol{x}} \log p_{t}(\boldsymbol{x})\right) d t.$
We can generate a new sample by numerically solving this Ordinary Differential Equation (ODE). For each $\boldsymbol x_T\sim p_T$, this ODE describes a trajectory from $\boldsymbol x_T$ to $\boldsymbol x_0$.

\subsubsection{Consistency Training}
\label{sec:consistency_training}
Denote $\{\boldsymbol{x}_t \}$ as a ODE trajectory, a function is called consistency function, if $\boldsymbol g(\boldsymbol x_{t_1},t_1)=\boldsymbol g(\boldsymbol x_{t_2},t_2)$, for any $\boldsymbol x_{t_1}, \boldsymbol x_{t_2}\in\{\boldsymbol{x}_t \}$.
To reduce the time consumption for sampling from diffusion models, consistency training utilizes a model to fit the consistency function $\boldsymbol g(\boldsymbol x_{t_1},t_1)=\boldsymbol g(\boldsymbol x_{t_2},t_2)=\boldsymbol x_0$. The ODE trajectory selected by consistency training is
\begin{equation}
d \boldsymbol x=t\nabla_{\boldsymbol x}\log p_t(\boldsymbol x)dt, t\in [0, T].
    \label{ode}
\end{equation}
In this setting, the distribution of 
$$p_t(\boldsymbol x)=p_0(\boldsymbol x)\ast\mathcal{N}(0,t^2\boldsymbol I),$$
where $\ast$ is convolution operator. The consistency models are denoted as $\boldsymbol f(\boldsymbol x_t,t,\boldsymbol \theta)$. Consistency model is defined as
\begin{small}
\begin{equation}
\boldsymbol f(\boldsymbol x_t,t, \boldsymbol\theta)=\frac{0.5^2}{r_t^2+0.5^2}\boldsymbol x_t+\frac{0.5r_t}{\sqrt{0.5^2+r_t^2}}\boldsymbol F_{\boldsymbol \theta}((\frac{1}{\sqrt{r_t^2+0.5^2}})\boldsymbol x_t,t),
\label{definition_f}
\end{equation}
\end{small}

\noindent where $\boldsymbol \theta$ represents the parameters of the model, $\boldsymbol F_{\boldsymbol\theta}$ is the output of network, $r_t=t-\epsilon$, and $\epsilon$ is a small number for numeric stability.

To train the consistency model $\boldsymbol f(\boldsymbol x_t,t,\theta)$, we need to divide the time interval $[0,T]$ into several discrete time steps, denoted as $t_0=\epsilon<t_1<t_2<\dots<t_N=T$. $N$ gradually increases as the training progresses, satisfying 
\begin{small}
$$N(k)=\lceil \sqrt{\frac{k}{K}((s_1+1)^2-s_0^2)+s_0^2}-1 \rceil+1,$$
\end{small}
where $K$ denotes the total number of training steps, $s_1$ is the end of time steps, $s_0$ is the beginning of time steps and $k$ refers to the current training step.
Denote 
\begin{small}
$$\mathcal L_{CD}^n=\sum_{k=1}^{n}\mathbb E[d(\boldsymbol f(\boldsymbol x_{t_{k}}, t_{k},\boldsymbol \theta), \boldsymbol f(\boldsymbol x_{t_{k-1}}^\Phi, t_{k-1},\boldsymbol\theta^-))],$$
\end{small}
where $d(\cdot)$ is a distance function, $\boldsymbol\theta^-$ is the exponentially moving average of each batch of $\boldsymbol\theta$, and $\boldsymbol x_{t_{n+1}}\sim p_{t_{n+1}}$. $\boldsymbol x_{t_n}^\Phi$ is obtained from $\boldsymbol x_{t_{n+1}}$ through the ODE solver $\Phi$ using \cref{ode}. About $\boldsymbol \theta$ and $\boldsymbol \theta^-$, the equation is given as $\boldsymbol \theta^-_{k+1}=\mu(k) \boldsymbol \theta_k^-+(1-\mu(k))\boldsymbol \theta_k$, where $\mu(k)=\exp(\frac{s_0\log \mu_0}{N(k)})$
and $\mu_0$ is the coefficient at the beginning.

However, calculating $\mathcal L^\Phi_{CD}$ requires training another score-based generative model. They also propose using conditional trajectories to approximate $x_{t_{n}}^\Phi$. This loss is denoted as 
\begin{small}
    $$\mathcal L^{n}_{CT}=\sum_{k=1}^{n}\mathbb E [d(f(\boldsymbol x_0 + t_{k}\boldsymbol z, t_{k},\boldsymbol \theta), f(\boldsymbol x_0 + t_{k-1}\boldsymbol z, t_{k-1},\boldsymbol \theta^-))],$$
\end{small}
where $\boldsymbol x_0\sim p_0$
and $\boldsymbol z\sim \mathcal N(0,I)$. $\mathcal{L}_{CT}=\mathcal{L}^{N}_{CT}$ is called consistency training loss. Using this loss to train the consistency model is called consistency training. This loss is proven \cite{Song2023Consistency} to satisfy 
\begin{small}
\begin{equation}
    \mathcal L^n_{CT}=\mathcal L^n_{CD}+o(\Delta t),
    \label{ct2cd}
\end{equation}
\end{small}
when the ODE solver $\Phi$ is Euler solver.

\subsubsection{Generative Adversarial Networks}

Generative Adversarial Networks (GANs), as generative models, are divided into two parts during training. One part is the generator, denoted as $G(\cdot)$, which is used to generate samples from the approximated target distribution. The other part is the discriminator, denoted as $D(\cdot)$. The training of GANs is alternatively optimizing $G(\cdot)$ and $D(\cdot)$: 1) train to distinguish whether the sample is a generated sample; 2) train $G(\cdot)$ to deceive the discriminator. These two steps are alternated in training. One type of GANs can be described as the following minimax problem: $ \min _{G} \max _{D} V(G, D)=\mathbb{E}_{\boldsymbol x \sim p_{\text {data }}(\boldsymbol x)}[\log D(\boldsymbol x)]+\mathbb{E}_{\boldsymbol z \sim p_{\boldsymbol z}(\boldsymbol z)}[\log (1-D(G(\boldsymbol z)))] $. It can be proven that this minimax problem is equivalent to minimizing the JS-divergence between $p_{\text{data}}$ and $G(\boldsymbol z)$, where $\boldsymbol z \sim p_{\boldsymbol z}$. 

To improve the training stability of GANs, many methods have been proposed. A practical approach is the zero-centered gradient penalty. This is achieved by using the following regularization: 
\begin{small}
\begin{equation}
\mathcal L_{gp}=\Vert \nabla_{\boldsymbol x}D(\boldsymbol x) \Vert^2,\boldsymbol x\sim p_{\text{data}}.
\end{equation}
\end{small}

To reduce computational overhead, this regularization can be applied intermittently every few training steps, rather than at every step.

\subsection{Analysis the Loss Function}
\label{analysis}

\begin{theorem}
If the consistency model satisfies the Lipschitz condition: there exists $L>0$ such that for all $\boldsymbol x$, $\boldsymbol y$ and $t$, we have $\Vert \boldsymbol f(\boldsymbol x,t,\boldsymbol \theta)-\boldsymbol f(\boldsymbol y,t,\boldsymbol \theta)\Vert_2\leq L\Vert \boldsymbol x-\boldsymbol y\Vert_2$, then minimizing the consistency loss will reduce the upper boundary of the W-distance between the two distributions. This can be formally articulated as the following theorem:
\begin{small}
\begin{equation}
\begin{split}
    \mathcal{W}[f_{t_k},g_{t_k}]
    &=\mathcal{W}[f_{t_k},p_0]\\ 
    &\leq L\mathcal{W}[q_{t_k},p_{t_k}]+\mathcal L^{t_k}_{CT}+t_kO(\Delta t)+o(\Delta t),
\end{split}
\label{error}
\end{equation}
\end{small}

where the definition of $p_t$,$\boldsymbol f$, $\mathcal L_{CT}^{t_k}$ and $\boldsymbol g$ is consistent with that in \cref{sec:consistency_training}. $\Delta t=\max (t_k-t_{k-1})$. The distribution $f_t$ is defined as $\boldsymbol f(\boldsymbol x_t,t,\boldsymbol \theta)$, where $\boldsymbol x_t\sim q_t$, and the distribution $g_t$ is defined as $\boldsymbol g(\boldsymbol y_t,t)$, where $\boldsymbol y_t\sim p_t$. The distribution $q_t$ represents the noise distribution when generating samples.
\end{theorem}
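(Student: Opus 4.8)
The plan is to combine a triangle inequality for the Wasserstein distance with the Lipschitz hypothesis and a telescoping argument along the true ODE trajectory. First I would dispose of the stated equality: since $\boldsymbol g$ is the \emph{exact} consistency function of \cref{ode}, it sends every point of a trajectory to its endpoint $\boldsymbol x_0$, so for $\boldsymbol y_{t_k}\sim p_{t_k}$ the pushforward $g_{t_k}$ is exactly the data law $p_0$; hence $\mathcal{W}[f_{t_k},g_{t_k}]=\mathcal{W}[f_{t_k},p_0]$. To bound the right-hand side I would introduce the auxiliary law $h_{t_k}$, defined as the distribution of $\boldsymbol f(\boldsymbol x_{t_k},t_k,\boldsymbol \theta)$ when $\boldsymbol x_{t_k}\sim p_{t_k}$ (the same network, but fed the true noisy marginal rather than the sampling noise), and split
\begin{small}
\begin{equation*}
\mathcal{W}[f_{t_k},p_0]\leq \mathcal{W}[f_{t_k},h_{t_k}]+\mathcal{W}[h_{t_k},p_0].
\end{equation*}
\end{small}

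For the first term, note that $f_{t_k}$ and $h_{t_k}$ are pushforwards of $q_{t_k}$ and $p_{t_k}$ under the \emph{same} map $\boldsymbol f(\cdot,t_k,\boldsymbol \theta)$, which is $L$-Lipschitz by hypothesis. Pushing an optimal coupling of $q_{t_k}$ and $p_{t_k}$ through this map yields a coupling of $f_{t_k}$ and $h_{t_k}$ whose transport cost is at most $L$ times the original, so $\mathcal{W}[f_{t_k},h_{t_k}]\leq L\,\mathcal{W}[q_{t_k},p_{t_k}]$, which produces the first term of the bound.

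The heart of the proof is the second term. I would couple $h_{t_k}$ and $p_0$ through the true backward ODE: draw $\boldsymbol x_{t_k}\sim p_{t_k}$, solve \cref{ode} to obtain the whole trajectory $\boldsymbol x_{t_k},\dots,\boldsymbol x_{t_0}$ together with its endpoint $\boldsymbol x_0\sim p_0$, so that $\mathcal{W}[h_{t_k},p_0]\leq \mathbb E\Vert \boldsymbol f(\boldsymbol x_{t_k},t_k,\boldsymbol \theta)-\boldsymbol x_0\Vert$. Writing the integrand as a telescoping sum
\begin{small}
\begin{equation*}
\boldsymbol f(\boldsymbol x_{t_k},t_k,\boldsymbol \theta)-\boldsymbol x_0=\sum_{j=1}^{k}\bigl[\boldsymbol f(\boldsymbol x_{t_j},t_j,\boldsymbol \theta)-\boldsymbol f(\boldsymbol x_{t_{j-1}},t_{j-1},\boldsymbol \theta)\bigr]+\bigl[\boldsymbol f(\boldsymbol x_{t_0},t_0,\boldsymbol \theta)-\boldsymbol x_0\bigr],
\end{equation*}
\end{small}
applying the triangle inequality, and matching each increment to a term of $\mathcal L^{t_k}_{CD}$ after replacing the true $\boldsymbol x_{t_{j-1}}$ by the Euler-solver output $\boldsymbol x^\Phi_{t_{j-1}}$ and $\boldsymbol \theta$ by $\boldsymbol \theta^-$. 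The boundary term vanishes to leading order because the parametrization \cref{definition_f} forces $\boldsymbol f(\boldsymbol x_{t_0},t_0,\boldsymbol \theta)=\boldsymbol x_{t_0}\approx \boldsymbol x_0$ at $t_0=\epsilon$. The per-step Euler error is $O(\Delta t^2)$ and, amplified by $L$ and summed over the $k=O(t_k/\Delta t)$ steps, accumulates to $t_kO(\Delta t)$; this yields $\mathbb E\Vert\cdots\Vert\leq \mathcal L^{t_k}_{CD}+t_kO(\Delta t)$, and substituting \cref{ct2cd} in the form $\mathcal L^{t_k}_{CD}=\mathcal L^{t_k}_{CT}-o(\Delta t)$ delivers the claimed inequality.

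The main obstacle will be the bookkeeping in this last step: the increments that the consistency loss actually penalizes are $\boldsymbol f(\boldsymbol x_{t_j},t_j,\boldsymbol \theta)-\boldsymbol f(\boldsymbol x^\Phi_{t_{j-1}},t_{j-1},\boldsymbol \theta^-)$, so I must simultaneously control three mismatches against the clean telescoping increments — the Euler discretization gap ($\boldsymbol x_{t_{j-1}}$ vs.\ $\boldsymbol x^\Phi_{t_{j-1}}$), the EMA gap ($\boldsymbol \theta$ vs.\ $\boldsymbol \theta^-$), and the boundary contribution at $t_0=\epsilon$ — and then verify that summing $O(t_k/\Delta t)$ per-step errors of order $O(\Delta t^2)$ lands exactly at $t_kO(\Delta t)$ while the CT-versus-CD substitution keeps the remainder at $o(\Delta t)$. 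Arranging the decomposition through the auxiliary law $h_{t_k}$ so that the Lipschitz factor $L$ multiplies only $\mathcal{W}[q_{t_k},p_{t_k}]$ and never the accumulated loss is the delicate structural point.
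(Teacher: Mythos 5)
Your proposal is correct and follows essentially the same route as the paper's proof: the same decomposition into a Lipschitz term $L\mathcal{W}[q_{t_k},p_{t_k}]$ plus the expected deviation of $\boldsymbol f$ from the true trajectory endpoint, the same telescoping along the ODE trajectory matched to the $\mathcal L_{CD}$ increments with per-step Euler error $O(\Delta t^2)$ accumulating to $t_kO(\Delta t)$, the same boundary condition at $t_0$ from \cref{definition_f}, and the same final substitution of \cref{ct2cd}. The only cosmetic difference is that you route the first step through the Wasserstein triangle inequality with an explicit intermediate law $h_{t_k}$, whereas the paper obtains the identical two terms by applying the pointwise triangle inequality inside a single coupling $\gamma^*$ of $q_{t_k}$ and $p_{t_k}$.
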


\begin{proof}
The W-distance (Wasserstein-distance) is defined as follows:
\begin{small}
$$
\mathcal{W}_\rho[p, q]=\inf_{\gamma \in \prod [p,q]}\iint \gamma(\boldsymbol x, \boldsymbol y)\Vert \boldsymbol x-\boldsymbol y\Vert_\rho d\boldsymbol x d\boldsymbol y,
$$
\end{small}
where $\gamma$ is any joint distribution of $p$ and $q$. For convenience, we take the case of $\rho=2$ and simply denote $\Vert \cdot \Vert$ as $\Vert \cdot \Vert_2$, and denote $\mathcal{W}[p, q]$ as $\mathcal{W}_2[p, q]$. Let $\{\boldsymbol x_{t_k}\}$ or $\{\boldsymbol y_{t_k}\}$ be the points on the same trajectory defined by the ODE in \cref{ode} on the ODE trajectory. For $\mathcal{W}[f_{t_k},g_{t_k}]$, we have the following inequality:
\begin{small}
\begin{align*}
    &\mathcal{W}[f_{t_k},g_{t_k}]\\
    =&\inf_{\gamma^* \in \prod [f_{t_k},g_{t_k}]}\iint \gamma^*(\hat{\boldsymbol  x}_{t_k}, \hat{\boldsymbol y}_{t_k})\Vert \hat{\boldsymbol  x}_{t_k}-\hat{\boldsymbol  y}_{t_k}\Vert_\rho d\hat{\boldsymbol  x}_{t_k} d\hat{\boldsymbol  y}_{t_k} \\
    \overset{(\rom 1)}{\leq}& \iint \gamma(\hat{\boldsymbol  x}_{t_k}, \hat{\boldsymbol  y}_{t_k})\Vert \hat{\boldsymbol  x}_{t_k}-\hat{\boldsymbol  y}_{t_k}\Vert d\hat{\boldsymbol  x}_{t_k} d\hat{\boldsymbol  y}_{t_k}, \gamma \in \prod [f_{t_k},g_{t_k}] \\
    {=}&\mathbb{E}_{\hat{\boldsymbol  x}_{t_k},\hat{\boldsymbol  y}_{t_k}\sim\gamma \in \prod [f_{t_k},g_{t_k}]}[\Vert \hat{\boldsymbol x}_{t_k}-\hat{\boldsymbol  y}_{t_k}\Vert]\\
    \overset{(\rom 2)}{=}&\mathbb{E}_{\boldsymbol x_{t_k},\boldsymbol y_{t_k}\sim\gamma \in \prod [q_{t_k},p_{t_k}]}[\Vert \boldsymbol f(\boldsymbol x_{t_k}, t_k,\phi)-\boldsymbol g(\boldsymbol y_{t_k},t_k)\Vert].
\end{align*}
\end{small}

\noindent Here, (\rom{1}) holds because $\gamma$ is the joint distribution of any $p_t$ and $q_t$. (\rom{2}) is obtained through the law of the unconscious statistician. Since the joint distribution $\gamma \in \prod [q_{t_k},p_{t_k}]$ in the above formula is arbitrary, so we choose the distribution satisfying $\mathbb{E}_{\boldsymbol x_{t_k},\boldsymbol y_{t_k}\sim\gamma^*}[\Vert \boldsymbol y_{t_k} - \boldsymbol x_{t_k}\Vert]=\mathcal{W}[q_{t_k},p_{t_k}]$. We denote it as $\gamma^*$. The expectation $\mathbb{E}_{\boldsymbol x_{t_k},\boldsymbol y_{t_k}\sim\boldsymbol \gamma^*}[\Vert f(\boldsymbol x_{t_k}, t_k,\theta)-g(\boldsymbol y_{t_k},t_k)\Vert]$ satisfies the following inequality:
\begin{small}
\begin{equation}
\begin{split}
    &\mathbb{E}_{\boldsymbol x_{t_k},\boldsymbol y_{t_k}\sim\gamma^*}[\Vert \boldsymbol f(\boldsymbol x_{t_k}, t_k,\boldsymbol \theta)-\boldsymbol g(\boldsymbol y_{t_k},t_k)\Vert]\\
    {\leq}&\mathbb{E}_{\boldsymbol y_{t_k}\sim p_{t_k}}[\Vert \boldsymbol g(\boldsymbol y_{t_k},t_k) - \boldsymbol f(\boldsymbol y_{t_k}, t_k,\boldsymbol \theta)\Vert] + L\mathcal{W}[q_{t_k},p_{t_k}].
    \label{E1}
\end{split}
\end{equation}
\end{small}
If the ODE solver is Euler ODE solver, we have:
\begin{small}
    \begin{equation}
        \begin{split}
    &\mathbb{E}_{\boldsymbol y_{t_k}\sim p_{t_k}}[\Vert \boldsymbol g(\boldsymbol y_{t_k},t_k) - \boldsymbol f(\boldsymbol y_{t_k}, t_k,\boldsymbol \theta)\Vert]\\
    {\leq}&\mathbb{E}_{\boldsymbol y_{t_{k-1}}\sim p_{t_{k-1}}}[\Vert \boldsymbol g(\boldsymbol y_{t_{k-1}},t_{k-1}) - \boldsymbol f(\boldsymbol y_{t_{k-1}}, t_{k-1},\boldsymbol \theta)\Vert]\\
    &\quad+  L(t_{k}-t_{k-1})O(t_{k}-t_{k-1})\\
    &\quad+\mathbb{E}_{\boldsymbol y_{t_{k}}\sim p_{t_{k}}}[\Vert \boldsymbol f(\boldsymbol y_{t_{k-1}}^\phi, t_{k-1},\boldsymbol \theta) - \boldsymbol f(\boldsymbol y_{t_k}, t_k,\boldsymbol \theta)\Vert]\\
    \label{E2}
        \end{split}
    \end{equation}
\end{small}

\noindent The detailed proofs for the aforementioned inequalities can be found in \cref{detail_proof}. We iterate multiple times until $t_0$. At this point, from \cref{definition_f}, we have $\Vert g(y_{t_{0}},t_{0}) - f(y_{t_{0}}, t_{0},\theta)\Vert=0$. So, we can obtain the inequality below:
\begin{small}
\begin{align*}
    &\mathbb{E}_{\boldsymbol y_{t_k}\sim p_{t_k}}[\Vert \boldsymbol g(\boldsymbol y_{t_k},t_k) - \boldsymbol f(\boldsymbol y_{t_k}, t_k,\boldsymbol \theta)\Vert]\\
    \leq&\mathcal L^k_{CD}+\sum_{i=1}^k L(t_i-t_{i-1})O((t_i-t_{i-1}))\\
    \overset{(\rom 1)}{=}&\mathcal L^k_{CT}+\sum_{i=1}^k t_{k}O((\Delta t))+o(\Delta t).
\end{align*}
\end{small}

\noindent Here, (\rom{1}) holds because $\Delta t=\max (t_k-t_{k-1})$, and the relationship between $\mathcal L^k_{CD}$ and $\mathcal L^k_{CT}$ in \cref{ct2cd}. Since consistency function $\boldsymbol g(\boldsymbol x_t,t)=\boldsymbol x_0$, it follows that $\mathcal{W}[f_{t_k},g_{t_k}]=\mathcal{W}[f_{t_k},p_0]$. Putting these together, the proof is complete.
\end{proof}

 Analyzing \cref{error}, $\mathcal{W}[q_{t_k},p_{t_k}]$ is the W-distance between the two sampling distributions, which is independent of the model. We set $q_t = p_t$ to eliminate $\mathcal{W}[q_{t_k},p_{t_k}]$. The term $o(\Delta t)$ and $t_kO(\Delta t)$ originate from approximation errors, where $t_kO(\Delta t)$ increases with the increase of $t_k$. The remaining term is $\mathcal L^{k}_{CT}=\sum_{i=1}^{k}\mathbb E [d(f(\boldsymbol x_0 + t_{i}\boldsymbol z, t_{i},\boldsymbol \theta), f(\boldsymbol x_0 + t_{i-1}\boldsymbol z, t_{i-1},\boldsymbol \theta^-))]$. It can be seen that this term also accumulates errors. The quality of the model's generation depends not only on the current loss at $t_k$, $\mathbb E [d(f(\boldsymbol x_0 + t_{k}\boldsymbol z, t_{k},\boldsymbol \theta), f(\boldsymbol x_0 + t_{k-1}\boldsymbol z, t_{k-1},\boldsymbol \theta^-))]$, but also on the sum of all losses for values less than $k$. These two accumulated errors may be one of the reasons why consistency training requires as large a batch size and large model size as possible. During training, it is not only necessary to ensure a smaller loss at the current $t_k$, but also to use a larger batch size and larger model size to ensure a smaller loss at previous $t$ values. Besides, reducing $\Delta t$ can help to lower this upper bound. However, as described in the original text \cite{Song2023Consistency}, reducing $\Delta t$ in practical applications does not always lead to performance improvements.

\subsection{Enhancing Consistency Training with Discriminator}
\label{enhancing}
Following the analysis in \cref{analysis}, it can be observed that the W-distance at time $t_k$ depends not only on the loss at $t_k$, but also on the loss at previous times. This could be one of the reasons why consistency training requires as large a batch size and model size as possible. However, it can be noted that at each moment $t_k$, the ultimate goal is to reduce the distance between the generated distribution and the target distribution.
In order to reduce the gap between two distributions, we propose not only using the W-distance, but also other distances, such as JS-divergence. Inspired by GANs, we suggest incorporating a discriminator into the training process.

It can be proven that when the generator training loss is given by 
\begin{small}
\begin{equation}
    \mathcal L_G=\log(1-D(\boldsymbol f(\boldsymbol x+t_{n+1}\boldsymbol z,t_{n+1}, \boldsymbol \theta_g), t_{n+1}, \boldsymbol \theta_d)),
    \label{lg}
\end{equation}
\end{small}
and the discriminator training loss is given by 
\begin{small}
\begin{equation}
\begin{split}
    \mathcal L_D=&-\log(1-D(\boldsymbol f(\boldsymbol x_g+t_{n+1}\boldsymbol z,t_{n+1}), \boldsymbol \theta_d)\\
    &-\log(D(\boldsymbol x_r,t_{n+1},\boldsymbol \theta_d)),
\end{split}
\label{ld}
\end{equation}
\end{small}

\noindent minimizing the loss leads to $\min_{\boldsymbol  f} (-2 \log 2 + 2 JSD\left(f_{t_k} \| p_{0}\right))$, which is equivalent to minimizing the JS-divergence. $D$ is the discriminator. It can be observed that this loss does not depend on the previous $t_k$ loss, and can directly optimize the distance between the current $t_k$ distributions. Therefore, the required batch size and model size can be smaller compared to consistency training.

However, although the ultimate goals of the two distances are the same, e.g., when the JS-divergence is $0$, the W-distance is also $0$, at which point the gradient of the discriminator is also $0$. However, at this point, the gradient of $\mathcal L_{CT}$ may not be $0$ due to the aforementioned error. Moreover, when $\mathcal L_{CT}$ is relatively large, the optimization direction of $\mathcal L_{CT}$ may conflict with $\mathcal L_G$. Consider the extreme case where the output of $f_{t_{n}}$ is completely random, it is clear that $\mathcal L_{CT}$ and $\mathcal L_G$ are in conflict, when training $\boldsymbol f$ at time $t_{n+1}$. On the other hand, when $\mathcal L_{CT}$ is relatively small, the model $f$ is easier to fit at $t_{n}$ than at $t_{n+1}$, thus generating better quality. Also, since $x_t$ and $x_{t_{n+1}}$ are close enough, their discriminators are also close enough, thus jointly improving the generation quality. 
Therefore, we employ the coefficient $\lambda$ to balance the proportion between $\mathcal L_{CT}$ and $\mathcal L_G$. Furthermore, as $\mathcal L^k_{CT}$ increases with $k$, the W-distance also increases. In order to improve the performance of consistency training, the weight of $\mathcal L_G$ should also increase. We utilize the formula \cref{lambda} to give $\mathcal L_G$ more weight, where $w$ is the weight at $n=N-1$, and $w_{mid}$ is the weight at $n=(N-1)/2$.
\begin{small}
\begin{equation}
    \lambda_N (n)=w\left(\frac{n}{N-1}\right)^{\log_{\frac{1}{2}}(\frac{w_{mid}}{w})} \label{lambda}.
\end{equation}
\end{small}

\noindent Please note, even though the fitting targets of all $f_{t_k}$ are $q_0$, we choose for the form $D(\boldsymbol x_t,t,\boldsymbol \theta_d)$ rather than $D(\boldsymbol x_t,\boldsymbol \theta_d)$ when constructing the discriminator. Although theoretically, the optimal distribution of the generator trained by these two discriminators is $p_0$, and for two similar samples, the discriminator in the form of $D(\boldsymbol x_t,\boldsymbol \theta_d)$ will generate similar gradients at different $t$, we find in our experiments \cref{ablation_discriminator} that this form of discriminator is not as effective as $D(\boldsymbol x_t,t,\boldsymbol \theta_d)$. The training algorithm is described in \cref{algorithm1}.

\subsection{Gradient Penalty Based Adaptive Data Augmentation}
\label{actaug}
For smaller datasets, in the field of GANs, there are many data augmentation works to improve generation effects. Inspired by StyleGAN2-ADA\cite{Tero2020Training}, we also utilize adaptive differentiable data augmentation. However, unlike StyleGAN2-ADA, which adjusts the probability of data augmentation based on the accuracy of the discriminator over time, it is difficult to adjust the augmentation probability through the accuracy of a single discriminator in our model due to the varying training difficulties at different $t$. As described in \cref{gp_stablility}, we find that the stability of the discriminator's gradient has a significant impact on training. This may be due to the interaction between $\mathcal L_{CT}$ and $\mathcal L_G$. We propose to adjust the probability of data augmentation based on the value of the gradient penalty over time. Given a differential data augmentation function $A(\boldsymbol x, p_{aug})$, where $p_{aug}$ is the probability of applying the data augmentation, the augmented discriminator is defined by:
$$
D_{aug}(\boldsymbol x_t,t, p_{aug},\boldsymbol \theta_d)=D(A(\boldsymbol x_t, p_{aug}), t, \boldsymbol \theta_d).
$$
The probability $p_{aug}$ is updated by
$$
p_{aug}\leftarrow \text{Clip}_{[0, 1]}(p_{aug}+2([\mathcal L_{gp}^- \geq \tau] - 0.5)p_r),
$$
where $[\cdot]$ denotes the indicator function, which takes a value of $1$ when the condition is true and $0$ otherwise. $\text{Clip}_{[0,1]}(\cdot)$ represents the operation of clipping the value to the interval $[0,1]$. $p_r$ denotes the update rate at each iteration, and $\mathcal L_{gp}^-$ is the exponential moving average of $\mathcal L_{gp}$, defined as $\mathcal L_{gp}^-=\mu_p\mathcal L_{gp}^-+(1-\mu_p)\mathcal L_{gp}$. $p_r$ and $\mu_p$ are constants within the range $[0,1]$. This algorithm is described in \cref{act_aug} shown in \cref{sec:act}. Our motivation for proposing the use of data augmentation is to mitigate the overfitting phenomenon in the discriminator. We conduct experiments on CIFAR10 to verify the method. However, the performance of data augmentation on large datasets, such as ImageNet 64$\times$64, remains to be explored.

\begin{table}[]
\centering
\caption{Training steps and model parameter size are reported. BS stands for Batch Size. For ACT, Params represent parameters of the consistency model + discriminator.}
\label{resource}
    \vspace{-2mm}
\adjustbox{width=0.87\columnwidth}{
\begin{tabular}{lllllll}
\hline
 Dataset                  & Method  & BS   & Steps & Params          & Fid  \\ \hline
\multirow{5}{*}{CIFAR10}  & CT      & 512  & 800K  & 73.9M           & 8.7  \\
                          & CT      & 256  & 800K  & 73.9M           & 10.4 \\
                          & CT      & 128  & 800K  & 73.9M           & 14.4 \\
                          & ACT-Aug & \textbf{80}   & \textbf{300K}  & \textbf{27.5M+14.1M}     & \textbf{6.0}  \\ \hline
\multirow{2}{*}{ImageNet} & CT      & 2048 & 800K  & 282M            & 13.0 \\
                          & ACT     & \textbf{320}  & \textbf{400K}  & \textbf{107M+54M}        & \textbf{10.6} \\ \hline
\multirow{2}{*}{LSUN Cat} & CT      & 2048 & 1000K & 458M            & 20.7  \\
                          & ACT     & \textbf{320}  & \textbf{165K}  & \textbf{113M+57M}        & \textbf{13.0}  \\ \hline
\end{tabular}
}
    \vspace{-1mm}
\end{table}

\begin{table}[]
\centering
\caption{Sample quality of ACT on the ImageNet dataset with the resolution of $64\times 64$. Our ACT significantly outperforms CT.}
    \vspace{-2mm}
\label{imagenet64}
\adjustbox{width=0.82\columnwidth}{
\begin{tabular}{lllll}
\hline
Method      & NFE ($\downarrow$)& FID ($\downarrow$) & Prec. ($\uparrow$) & Rec. ($\uparrow$) \\ \hline
BigGAN-deep \cite{Brock2019Large} & 1   & 4.06 & \textbf{0.79}  & 0.48 \\
ADM   \cite{dhariwal2021diffusion}      & 250 & \textbf{2.07} & 0.74  & 0.63 \\
EDM    \cite{Karras2022Elucidating}     & 79  & 2.44 & 0.71  & \textbf{0.67} \\
DDPM  \cite{ho2020denoising}      & 250 & 11.0 & 0.67  & 0.58 \\
DDIM  \cite{DDIM}      & 50  & 13.7 & 0.65  & 0.56 \\
DDIM  \cite{DDIM}      & 10  & 18.3 & 0.60  & 0.49 \\ \hline
CT          & 1   & 13.0 & \textbf{0.71}  & 0.47 \\
ACT         & 1   & \textbf{10.6} &   0.67    &   \textbf{0.56}   \\ \hline
\end{tabular}
}
    \vspace{-4mm}
\end{table}

\begin{algorithm}[tbp]
\caption{Adversarial Consistency Training}
\label{algorithm1}
\begin{algorithmic}[1]
\State \textbf{Input:} dataset $\mathcal{D}$, initial consistency model parameter $\theta_g$, discriminator $\theta_d$, step schedule $N(\cdot)$, EMA decay rate schedule $\mu(\cdot)$, optimizer $\text{opt}(\cdot,\cdot)$, discriminator $D(\cdot,\cdot,\theta_d)$, adversarial rate schedule $\lambda(\cdot)$, gradient penalty weight $w_{gp}$, gradient penalty interval $I_{gp}$.
\State $\boldsymbol \theta_g^-\leftarrow \boldsymbol \theta$ and $k\leftarrow 0$
\Repeat
\State Sample $\boldsymbol x\sim \mathcal{D}$, and $n\sim\mathcal{U}[\![1,N(k)]\!]$
\State Sample $\boldsymbol z\sim \mathcal{N}(0,\boldsymbol I)$ \Comment{Train Consistency Model}
\State $\mathcal{L}_{CT}\leftarrow$
\Statex \hspace{\algorithmicindent} \quad $d(\boldsymbol f(\boldsymbol x+t_{n+1}\boldsymbol z,t_{n+1}, \boldsymbol \theta_g), \boldsymbol f(\boldsymbol x+t_n\boldsymbol z,t_n,\boldsymbol \theta_g^-))$
\State $\mathcal L_G\leftarrow$
\Statex \hspace{\algorithmicindent} \quad $\log(1-D(\boldsymbol f(\boldsymbol x+t_{n+1}\boldsymbol z,t_{n+1}, \boldsymbol \theta_g), t_{n+1}, \boldsymbol \theta_d))$
\State $\mathcal{L}_f\leftarrow(1-\lambda_{N(k)}(n+1))\mathcal{L}_{CT} + \lambda_{N(k)}(n+1)\mathcal L_G$
\State $\boldsymbol \theta_g\leftarrow\text{opt}(\boldsymbol \theta_g,\nabla_{\boldsymbol \theta_g}(\mathcal{L}_f))$
\State $\boldsymbol \theta_g^-\leftarrow\text{stopgrad}(\mu(k)\boldsymbol \theta_g^-+(1-\mu(k))\boldsymbol \theta_g)$

\Statex

\State Sample $\boldsymbol x_g\sim \mathcal{D}$, $\boldsymbol x_r\sim \mathcal{D}$, and $n\sim\mathcal{U}[\![1,N(k)]\!]$
\State Sample $\boldsymbol z\sim \mathcal{N}(0,\boldsymbol I)$\Comment{Train Discriminator}
\State $\mathcal L_D\leftarrow-\log(D(\boldsymbol x_r,t_{n+1},\boldsymbol \theta_d))$
\Statex \hspace{\algorithmicindent} \quad $-\log(1-D(\boldsymbol f(\boldsymbol x_g+t_{n+1}\boldsymbol z,t_{n+1},\boldsymbol \theta_d))$
\State $\mathcal L_{gp}\leftarrow$
\Statex \hspace{\algorithmicindent} \quad $w_{gp}\Vert\nabla_{\boldsymbol x_r} D(\boldsymbol x_r,t_{n+1},\boldsymbol \theta_d)\Vert^2[k \mod I_{gp}=0]$
\State $\mathcal{L}_d\leftarrow \lambda_{N(k)}(n+1)\mathcal{L}_D + \lambda_{N(k)}(n+1)\mathcal{L}_{gp}$
\State $\boldsymbol \theta_d\leftarrow\text{opt}(\boldsymbol \theta_d,\nabla_{\boldsymbol \theta_d}(\mathcal{L}_d))$
\State $k\leftarrow k+1$
\Until{convergence}
\end{algorithmic}
\end{algorithm}

\section{Experiments}

In this section, we report experimental settings and results on CIFAR-10, ImageNet64 and LSUN Cat 256 datasets.

\subsection{Generation Performance}
In this section, we report the performance of our model on the CIFAR10, ImageNet 64$\times$64 datasets and LSUN Cat 256$\times$256 datasets. The results demonstrate a significant improvement of our method over the original approach. We exhibit the results on CIFAR10 in \cref{cifar10-sample}, on ImageNet 64$\times$64 in \cref{imagenet64} and on LSUN Cat 256$\times$256 in \cref{lsun_cat}, respectively. The FID on CIFAR10 improves from 8.7 to 6.0. It improves from 13 to 10.6 on ImageNet 64$\times$64, and it improves from 20.7 to 13.0 on LSUN Cat 256$\times$256.

Furthermore, we demonstrate the performance of the consistency training on different batch sizes, and the sizes of the models used by the proposed method and consistency training, in \cref{resource}. As can be discerned from the data in the table, the batch size has a significant impact on consistency training. When the batch size is set to 256, the FID score escalates to 10.4 from 8.7. Besides, with a batch size of 128, the FID rises to 14.4. On the CIFAR10 dataset, the proposed method outperforms consistency training, achieving an FID of 6.0 with a batch size of 80, versus 8.7 with a batch size of 512. On ImageNet 64x64, it achieves an FID of 10.6 with a batch size of 320, compared to consistency training's 13.0 with a batch size of 2048. Besides, on LSUN Cat 256 $\times$ 256, the proposed method attains an FID of 13.0 with a batch size of 320, better than consistency training's 20.7 with a batch size of 2048. \cref{Generated_samples} shows the generated samples from model training on ImageNet 64$\times$64 and LSUN Cat 256$\times$256. \cref{cat_1,cat_2} shows more generated samples from model training on LSUN Cat 256$\times$256. \cref{metrics} provides explanations for all metrics. \cref{inpainting_sec} shows zero-shot image inpainting.

\subsection{Resource Consumption}
We utilize the DDPM model architecture as our backbone. While DDPM's performance isn't as high as \cite{dhariwal2021diffusion} and \cite{Score_Based}, it has fewer parameters and attention layers, enabling faster execution. Our model is significantly smaller than the 63.8M model used by consistency training on CIFAR10, with only 27.5M (41.6M with discriminator during training) parameters. On the ImageNet 64$\times$64 dataset, our model, with only 107M parameters (161M with discriminator during training), is smaller than the 282M model used by consistency training. The smaller model and batch size reduce resource consumption. In our experiments on CIFAR10, we utilize 1 NVIDIA GeForce RTX 3090, as opposed to the 8 NVIDIA A100 GPUs used for consistency training. For the ImageNet 64$\times$64 experiments, we employ 4 NVIDIA A100 GPUs, in contrast to the 64 A100 GPUs used for training in the consistency training setup. For the LSUN Cat 256$\times$256 experiments, we employ 8 NVIDIA A100 GPUs, in contrast to the 64 A100 GPUs used for training in the consistency training setup \cite{Song2023Consistency}. 

\begin{table}[]
\centering
\caption{Sample quality of ACT on the CIFAR10 dataset. We compare ACT with state-of-the-art GANs and (efficient) diffusion models. We show that ACT achieves the best FID and IS among all the one-step diffusion models.
}
    \vspace{-1mm}
\label{cifar10-sample}
\adjustbox{width=0.8\columnwidth}{
\begin{tabular}{llll}
\hline
Method           & NFE ($\downarrow$)  & FID ($\downarrow$)  & IS ($\uparrow$)  \\ \hline
BigGAN \cite{Brock2019Large}          & 1    & 14.7 & 9.22 \\
AutoGAN \cite{Gong2019Autogan}         & 1    & 12.4 & 8.40 \\
ViTGAN  \cite{Lee2022ViTGAN}         & 1    & 6.66 & 9.30 \\
TransGAN \cite{jiang2021transgan}        & 1    & 9.26 & 9.05 \\
StyleGAN2-ADA \cite{Tero2020Training}   & 1    & 2.92 & \textbf{9.83} \\
StyleGAN2-XL \cite{Sauer2022StyleGAN-XL}    & 1    & \textbf{1.85} &  -    \\ \hline
Score SDE \cite{Score_Based}       & 2000 & 2.20 & \textbf{9.89} \\
DDPM \cite{ho2020denoising}            & 1000 & 3.17 & 9.46 \\
EDM  \cite{Karras2022Elucidating}            & 36   & \textbf{2.04} & 9.84 \\
DDIM  \cite{DDIM}           & 50   & 4.67 &  -    \\
DDIM \cite{DDIM}            & 20   & 6.84 &  -    \\
DDIM \cite{DDIM}            & 10   & 8.23 &  -    \\ \hline
1-Rectified Flow \cite{Liu2023Flow} & 1    & 378  & 1.13 \\
Glow  \cite{Kingma2018Glow}           & 1    & 48.9 & 3.92 \\
Residual FLow \cite{Chen2019Residual}   & 1    & 46.4 &   -   \\
DenseFlow  \cite{Grcić2021Densely}      & 1    & 34.9 &   -   \\
DC-VAE  \cite{Parmar2021Dual}         & 1    & 17.9 & 8.20 \\
CT  \cite{Song2023Consistency}             & 1    & 8.70 & 8.49 \\
ACT              & 1    & 6.4  &   8.93   \\
ACT-Aug          & 1    & \textbf{6.0}  &   \textbf{9.15}   \\ \hline
\end{tabular}
}
    \vspace{-3mm}
\end{table}

    \vspace{-1mm}

\subsection{Ablation Study}
    \vspace{-1mm}
\label{ablation_study}
\subsubsection{Impacts of $\lambda_N$}
    \vspace{-1mm}

When $\lambda_N\equiv 0$, this reduces to consistency training. Conversely, when $\lambda_N\equiv 1$, it becomes Generative Adversarial Networks (GANs). According to the analysis in \cref{analysis}, as $\lambda_N$ increases, adversarial consistency training gains the capacity to enhance model performance with smaller batch sizes, leveraging the discriminator. However, as discussed in \cref{enhancing}, an overly large $\lambda_N$ can lead to an excessive consistency training loss, thereby causing a conflict between $\mathcal L_{CT}$ and $\mathcal L_G$. Furthermore, it has been noted in the literature that for GANs, high-dimensional inputs may detrimentally affect model performance \cite{padala2021effect}. Therefore, as $\lambda_N$ increases, the model performance exhibits a pattern of initial improvement followed by a decline. Firstly, we demonstrate the phenomenon of mode collapse when $\lambda_N\approx 1$ on CIFAR10. As illustrated in \cref{mode_collapse}, the phenomenon of mode collapse is observed. It can be noted that, apart from the initial $t_k$ where the residual structure from  \cref{definition_f} results in outputs with substantial input components, preventing mode collapse, the other $t_k$ values all exhibit mode collapse. 

For a score-based model as defined in \cref{score_based}, the learned sampling process is the reverse of the diffusion process $p_{t}(\boldsymbol x_0|\boldsymbol x_t)$. 
However, the distribution $q_{t}(\boldsymbol x_0|\boldsymbol x_t)$ learned via \cref{lg,ld} does not consider the forward process of the diffusion. We conduct further experiments where the form of the discriminator is changed to $D(\boldsymbol x_0,\boldsymbol x_t,t,\boldsymbol\theta_d)$, and it can be proven \cref{condition_discriminator} that the distribution learned by the generator is $p_{t}(\boldsymbol x_0|\boldsymbol x_t)$. However, we also observe the phenomenon of mode collapse in our experiments. \cref{imagenet_crash} illustrates the training collapse on ImageNet 64$\times$64 when $\lambda_N\equiv 0.3$. It can be observed that at around 150k training steps, the $\mathcal L_{CT}$ becomes unstable and completely collapses around 170k. We have included the training curves for the proper $\lambda_N$ in the \cref{proper_lambda}. It can be observed that at this point, $\mathcal L_{CT}$ and several other training losses remain stable. Essentially, a smaller $w_{mid}$ and a larger $w$ are preferable choices.

\subsubsection{Connection between gradient penalty and training stability}
\label{gp_stablility}
In \cref{enhancing}, we analyze the relationship between $\mathcal L_{CT}$ and $\mathcal L_{G}$, highlighting the importance of gradient stability. In this section, we conduct experiments to validate our previous analysis and demonstrate the rationality of the ACT-Aug method proposed in \cref{actaug}.

\cref{imagenet_crash} illustrates the relationship among the values of the gradient penalty ($\mathcal L_{gp}$), consistency training loss ($\mathcal L_{CT}$), and FID. It can be observed that almost every instance of instability in $\mathcal L_{CT}$ is accompanied by a relatively large $\mathcal L_{gp}$. \cref{cifar10_noada} illustrates the relationship among these three on the CIFAR10 dataset. It can be seen that in the mid-stage of training, $\mathcal L_{gp}$ begins to slowly increase, a process that is accompanied by a gradual increase in $\mathcal L_{CT}$ and FID. Therefore, we believe that gradient stability is crucial for adversarial consistency training. Based on this, we propose ACT-Aug (\cref{actaug}) on small datasets, using $\mathcal L_{gp}$ as an indicator to adjust the probability of data augmentation, thereby stabilizing $\mathcal L_{gp}$ around a certain value.

\begin{table}[]
\centering
\caption{Sample quality of ACT on the LSUN Cat dataset with the resolution of 256$\times$256. Our ACT significantly outperforms CT. $^\dag$Distillation techniques.}
    
    \vspace{-2mm}
\label{lsun_cat}
\adjustbox{width=0.85\columnwidth}{
\begin{tabular}{lllll}
\hline
Method      & NFE ($\downarrow$)& FID ($\downarrow$) & Prec. ($\uparrow$) & Rec. ($\uparrow$) \\ \hline
DDPM  \cite{ho2020denoising} & 1000   & 17.1 & 0.53  & 0.48 \\
ADM   \cite{dhariwal2021diffusion}     & 1000 & \textbf{5.57} & 0.63  & \textbf{0.52} \\
EDM  \cite{Karras2022Elucidating}    & 79  & 6.69 & \textbf{0.70}  & 0.43 \\
PD$^\dag$
\cite{Salimans2022Progressive}    
& 1  & 18.3 & 0.60  & 0.49 \\
CD$^\dag$ \cite{Song2023Consistency}       & 1    &   11.0   &     0.65           &   0.36   \\ \hline
CT \cite{Song2023Consistency}       & 1   & 20.7 & 0.56  & 0.23 \\
ACT         & 1   & \textbf{13.0} &   \textbf{0.69}    &   \textbf{0.30}   \\ \hline
\end{tabular}
}
    \vspace{-4mm}
\end{table}

\begin{figure}
    \centering
    \includegraphics[width=0.98\columnwidth]{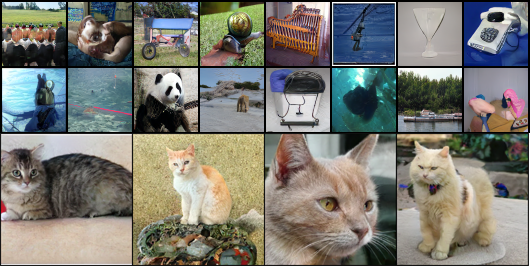}
    \vspace{-1mm}
    \caption{Generated samples on ImageNet 64$\times$64 (top two rows) and LSUN Cat 256$\times$256 (the third row).}
    \vspace{-5mm}
    \label{Generated_samples}
\end{figure}

\begin{figure}
    \centering
    \includegraphics[width=0.95\columnwidth]{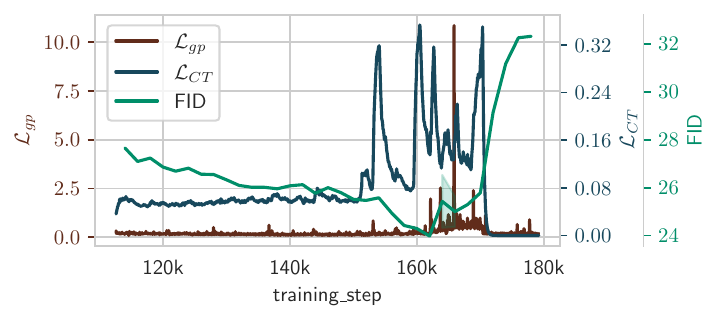}
    \vspace{-4mm}
    \caption{$\mathcal L_{gp}$, $\mathcal{L}_{CT}$, and FID of ACT on ImageNet 64x64 ($\lambda_N\equiv 0.3$, an overly large $\lambda_N$ leads to training collapse. Additionally, drastic changes in $\mathcal L_{gp}$ closely follow changes in $\mathcal L_{CT}$).}
    \vspace{-4mm}
    \label{imagenet_crash}
\end{figure}

\begin{figure}
    \centering
    \includegraphics[width=0.95\columnwidth]{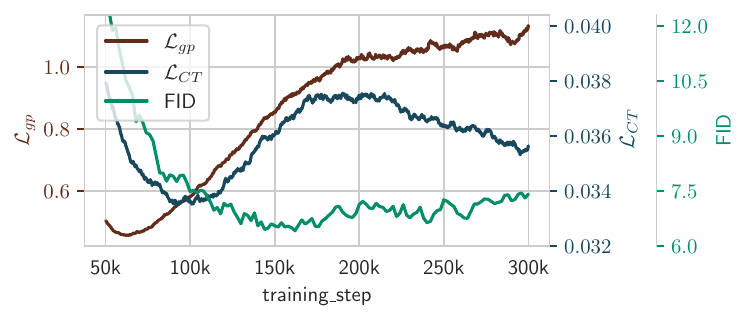}
    \vspace{-4mm}
    \caption{$\mathcal L_{gp}$, $\mathcal{L}_{CT}$, and FID of ACT on CIFAR10 ($\lambda_N\equiv 0.3$, an appropriate $\lambda_N$. In the later stages of training, without data augmentation, $\mathcal L_{CT}$, $\mathcal L_{gp}$, and FID all show relatively large increases).}
    \label{cifar10_noada}
    \vspace{-5mm}
\end{figure}

\subsubsection{Discriminator}
\vspace{-1mm}
\label{ablation_discriminator}
\textbf{Activation Function} Generally, GANs employ LeakyReLU as the activation function for the discriminator. This function is typically considered to provide better gradients for the generator. On the other hand, SiLU is the activation function chosen for DDPM, and it is generally regarded as a stronger activation function compared to LeakyReLU.
\cref{discriminator} displays the FID scores of different activation functions on CIFAR10 at 50k and 150k training steps. Contrary to previous findings, we discovery that utilizing the SiLU function for the discriminator leads to faster convergence rates and improved final performance. A possible reason is that $\mathcal L_{CT}$ provides an additional gradient direction, which mitigates the overfitting of the discriminator.

\noindent\textbf{Different Backbone} ~ \cref{discriminator} also displays the FID scores of different architecture on CIFAR10 at 50k and 150k training steps. In our investigation, we have evaluated the discriminators of StyleGAN2, ProjectedGAN and the downsampling part of DDPM (simply denoted as DDPM) as described in \cref{arch}. Due to the significant role of residual structures in designing GANs' discriminators, we incorporate residual connections between different downsampling blocks in DDPM, denoted as DDPM-res. It can be observed that DDPM performs the best. Although DDPM-res exhibits a faster convergence rate during the early stages of training, its performance in the later stages is not as satisfactory as that of DDPM. Furthermore, we find that DDPM demonstrates superior training stability compared to DDPM-res. We also experiment with whether or not to feed $t$ into the discriminator, denoted as $t$-emb. We find that feeding $t$ yields better results. This might be due to the fact that the optimal value of the discriminator varies with different $t_k$, hence the necessity of $t$-emb for better fitting.

\begin{table}[]
\centering
\caption{Ablation study of the discriminator.}
    \vspace{-1mm}
\label{discriminator}
\adjustbox{width=0.9\columnwidth}{
\begin{tabular}{lllll}
\hline
Discriminator & Activation  & $t$-emb & Fid (50k) & Fid (150k) \\ \hline
DDPM-res & LeakyReLU & False  &   18.7  &  10.6 \\
DDPM-res & LeakyReLU & True  &  11.5   & 7.4 \\
DDPM-res & SiLU & True  &    \textbf{9.9}  & 7.0 \\
DDPM     & SiLU        & True  & 12.5 & \textbf{6.5} \\
StyleGAN2     & LeakyReLU & True  & 16.7 & 9.5 \\
ProjectedGAN  & LeakyReLU & True  & 19.4 & 16.6 \\ \hline
\end{tabular}
}
    \vspace{-5mm}
\end{table}

\section{Conclusion}
\vspace{-1mm}

We proposed Adversarial Consistency Training (ACT), an improvement over consistency training. Through analyzing the consistency training loss, which is proven to be the upper bound of the W-distance between the sampling and target distributions, we introduced a method that directly employs Jensen-Shannon Divergence to minimize the distance between the generated and target distributions. This approach enables superior generation quality with less than $1/6$ of the original batch size and approximately $1/2$ of the original model parameters and training steps, thereby having smaller resource consumption. Our method retains the beneficial capabilities of consistency models, such as inpainting. Additionally, we proposed to use gradient penalty-based adaptive data augmentation to improve the performance on small datasets. The effectiveness has been validated on CIFAR10, ImageNet 64$\times$64 and LSUN Cat 256$\times$256 datasets, highlighting its potential for broader application in the field of image generation. 

However, the interaction between $\mathcal L_{CT}$ and $\mathcal L_{G}$ can be further explored to improve our method. In addition to using JS-Divergence, other distances can also be used to reduce the distance between the generated and target distributions. In the future, we will focus on these two aspects to further boost the performance.

\vspace{-2mm}
\section{Acknowledgement}
\vspace{-1mm}
Fei Kong and Xiaoshuang Shi were supported by the National Natural Science Foundation of China (No. 62276052).

{
    \small

}

\clearpage
\clearpage
\setcounter{page}{1}
\maketitlesupplementary
\appendix

\counterwithin{figure}{section}
\counterwithin{table}{section}
\renewcommand{\thefigure}{\thesection\arabic{figure}}
\renewcommand{\thetable}{\thesection\arabic{table}}

\section{Architecture and Experiment settings}
\label{arch}

\textbf{Architecture} ~ For the consistency model architecture, we employ a structure similar to that of DDPM \cite{ho2020denoising}, with the exception of altering the corresponding embeddings to continuous time. We utilize the Python library diffusers \cite{von-platen-etal-2022-diffusers}. In terms of the discriminator, we employ the downsampling structure in the DDPM, preserving it up to the mid-block. Subsequently, a linear layer is added to map it to  $\mathbb R$. Additionally, the layers-per-block parameter is set to 150\% of that in the consistency model, with all other parameters remaining the same. The parameters passed to the UNet2DModel are listed in \cref{unet}. B=128. In the context of block type, `D' represents DownBlock2D, `A' stands for either AttnDownBlock2D or AttnUpBlock2D, and `U' means UpBlock2D.

\begin{table}[h!]
\centering
\adjustbox{width=1\columnwidth}{
\begin{tabular}{l|l|l|l}
\hline
                     & CIFAR10       & ImageNet 64$\times$64 & LSUN Cat 256$\times$256  \\ \hline
layers\_per\_block   & 2             & 2                     & 2                     \\
block\_out\_channels & (1B,1B,2B,2B) & (1B,2B,2B,4B,4B)      & (1B,1B,2B,2B,4B,4B)      \\
down\_block\_types   & DADD          & DDADD                 & DDDDAD                \\
up\_block\_types     & UUAU          & UUAUU                 & UAUUUU                 \\
attention\_head\_dim & 8             & 16                    & 16                    \\ \hline
\end{tabular}
}
\caption{The parameters passed to the UNet2DModel. For those not listed, the default settings from the diffusers library are used.}
\label{unet}
\end{table}

\noindent\textbf{Experiment settings} ~ In this section, we report the configuration of various hyperparameters within our experimental framework.  \cref{experimental_settings} provides a summary of the experimental setup. Unless otherwise specified, the learning rate for both the consistency model and the discriminator is identical. The experiments conducted during the ablation study (\cref{ablation_study}), maintain consistency with the settings outlined in  this table, with the exception of the parameters specifically varied for the ablation study. Additionally, when employing the ProjectedGAN as the discriminator, the learning rate of discriminator is set to $0.002$, with $w$ and $w_{mid}$ values at $0.1$.

\label{metrics}
\noindent\textbf{Metrics} ~ The metrics used are IS, FID, Improved Precision and Improved Recall. The Inception Score (IS), introduced in \cite{salimans2016improved}, assesses a model's ability to generate convincing images of distinct ImageNet classes and capture the overall class distribution. However, it has a limitation in that it doesn't incentivize capturing the full distribution or the diversity within classes, leading to models with high IS even if they only memorize a small portion of the dataset, as noted in \cite{Barratt2018A}. To address the need for a metric that better reflects diversity, the Fréchet Inception Distance (FID) was introduced in \cite{Martin2017GANs}. This metric is argued to align more closely with human judgment than IS, and it quantifies the similarity between two image distributions in the latent space of Inception-V3 as detailed in \cite{szegedy2016Rethinking}. Additionally, \cite{nniemi2019Improved} developed Improved Precision and Recall metrics that evaluate the fidelity of generated samples by determining the proportion that aligns with the data manifold (precision) and the diversity by the proportion of real samples that are represented in the generated sample manifold (recall).

\begin{table}[]
\centering
\adjustbox{width=0.9\columnwidth}{
\begin{tabular}{l|l|l|l}
\hline
Hyperparameter      & CIFAR10           & ImageNet      & LSUN Cat       \\
                    &                   & 64$\times$64  & 256$\times$256 \\ \hline
Discriminator       & DDPM              & DDPM          & DDPM           \\
Learning rate       & 1e-4              & 5e-5          & 1e-5           \\
Batch size          & 80                & 320           & 320            \\
$\mu_0$             & 0.9               & 0.95          & 0.95           \\
$s_0$               & 2                 & 2             & 2              \\
$s_1$               & 150               & 200           & 150            \\
$w_{mid}$                 & 0.3               & 0.2           & 0.1            \\
$w$           & 0.3               & 0.6           & 0.6            \\
$I_{gp}$            & 16                & 16            & 16             \\
$w_{gp}$            & 10                & 10            & 10             \\
$\tau$              & 0.55              & -             & -              \\
$\mu_p$             & 0.93              & -             & -              \\
$p_r$               & 0.05              & -             & -              \\
Training iterations & 300k              & 400k          & 165k           \\
Mixed-Precision     & No                & Yes           & Yes            \\
Number of GPUs      & 1$\times$RTX 3090 & 4$\times$A100 & 8$\times$A100  \\ \hline
\end{tabular}
}
\caption{Summary of the experimental setup.}
\label{experimental_settings}
\end{table}

\section{Details of the Proof for Theorem 3.1}
\label{detail_proof}
Details for \cref{E1}:
\begin{align*}
    &\mathbb{E}_{\boldsymbol x_{t_k},\boldsymbol y_{t_k}\sim\gamma^*}[\Vert \boldsymbol f(\boldsymbol x_{t_k}, t_k,\boldsymbol \theta)-\boldsymbol g(\boldsymbol y_{t_k},t_k)\Vert]\\
    =&\mathbb{E}_{\boldsymbol x_{t_k},\boldsymbol y_{t_k}\sim\gamma^*}[\Vert \boldsymbol g(\boldsymbol y_{t_k},t_k) - \boldsymbol f(\boldsymbol y_{t_k}, t_k,\boldsymbol \theta) \\
    &\qquad \qquad \qquad+ \boldsymbol f(\boldsymbol y_{t_k}, t_k,\boldsymbol \theta) - \boldsymbol f(\boldsymbol x_{t_k}, t_k,\boldsymbol \theta)\Vert]\\
    \leq&\mathbb{E}_{\boldsymbol x_{t_k},\boldsymbol y_{t_k}\sim\gamma^*}[\Vert \boldsymbol g(\boldsymbol y_{t_k},t_k) - \boldsymbol f(\boldsymbol y_{t_k}, t_k,\boldsymbol \theta)\Vert \\
    &\qquad \qquad \qquad+ \Vert \boldsymbol f(\boldsymbol y_{t_k}, t_k,\boldsymbol \theta) - \boldsymbol f(\boldsymbol x_{t_k}, t_k,\boldsymbol \theta)\Vert]\\
    \overset{(\rom 1)}{\leq}&\mathbb{E}_{\boldsymbol x_{t_k},\boldsymbol y_{t_k}\sim\gamma^*}[\Vert \boldsymbol g(\boldsymbol y_{t_k},t_k) - \boldsymbol f(\boldsymbol y_{t_k}, t_k,\boldsymbol \theta)\Vert \\
    &\qquad \qquad \qquad+ L\Vert \boldsymbol y_{t_k} - \boldsymbol x_{t_k}\Vert]\\
    =&\mathbb{E}_{\boldsymbol x_{t_k},\boldsymbol y_{t_k}\sim\gamma^*}[\Vert \boldsymbol g(\boldsymbol y_{t_k},t_k) - \boldsymbol f(\boldsymbol y_{t_k}, t_k,\boldsymbol \theta)\Vert] \\
    &\qquad \qquad \qquad+ L \mathbb{E}_{\boldsymbol x_{t_k},\boldsymbol y_{t_k}\sim\gamma^*}[\Vert \boldsymbol y_{t_k} - \boldsymbol x_{t_k}\Vert]\\
    {=}&\mathbb{E}_{\boldsymbol y_{t_k}\sim p_{t_k}}[\Vert \boldsymbol g(\boldsymbol y_{t_k},t_k) - \boldsymbol f(\boldsymbol y_{t_k}, t_k,\boldsymbol \theta)\Vert] + L\mathcal{W}[q_{t_k},p_{t_k}].
\end{align*}
Here, (\rom{1}) holds because $\boldsymbol f$ satisfies the Lipschitz condition. 

\vspace{3mm}
\noindent Details for \cref{E2}:
\begin{align*}
    &\mathbb{E}_{\boldsymbol y_{t_k}\sim p_{t_k}}[\Vert \boldsymbol g(\boldsymbol y_{t_k},t_k) - \boldsymbol f(\boldsymbol y_{t_k}, t_k,\boldsymbol \theta)\Vert]\\
   \overset{(\rom 1)}{=}&\mathbb{E}_{\boldsymbol y_{t_{k}}\sim p_{t_{k}}}[\Vert \boldsymbol g(\boldsymbol y_{t_{k-1}},t_{k-1}) - \boldsymbol f(\boldsymbol y_{t_{k-1}}, t_{k-1},\boldsymbol \theta)\\
    \quad&+ \boldsymbol f(\boldsymbol y_{t_{k-1}}, t_{k-1},\boldsymbol \theta) - \boldsymbol f(\boldsymbol y_{t_{k-1}}^\phi, t_{k-1},\boldsymbol \theta) \\
    &\quad+ \boldsymbol f(\boldsymbol y_{t_{k-1}}^\phi, t_{k-1},\boldsymbol \theta) - \boldsymbol f(\boldsymbol y_{t_k}, t_k,\boldsymbol \theta)\Vert]\\
    \leq&\mathbb{E}_{\boldsymbol y_{t_{k}}\sim p_{t_{k}}}[\Vert \boldsymbol g(\boldsymbol y_{t_{k-1}},t_{k-1}) - \boldsymbol f(\boldsymbol y_{t_{k-1}}, t_{k-1},\boldsymbol \theta)\Vert]\\
    &\quad+ \mathbb{E}_{\boldsymbol y_{t_{k}}\sim p_{t_{k}}}[\Vert \boldsymbol f(\boldsymbol y_{t_{k-1}}, t_{k-1},\boldsymbol \theta) - \boldsymbol f(\boldsymbol y_{t_{k-1}}^\phi, t_{k-1},\boldsymbol \theta)\Vert] \\
    &\quad+\mathbb{E}_{\boldsymbol y_{t_{k}}\sim p_{t_{k}}}[\Vert \boldsymbol f(\boldsymbol y_{t_{k-1}}^\phi, t_{k-1},\boldsymbol \theta) - \boldsymbol f(\boldsymbol y_{t_k}, t_k,\boldsymbol \theta)\Vert]\\
    \overset{(\rom 2)}{\leq}&\mathbb{E}_{\boldsymbol y_{t_{k}}\sim p_{t_{k}}}[\Vert \boldsymbol g(\boldsymbol y_{t_{k-1}},t_{k-1}) - \boldsymbol f(\boldsymbol y_{t_{k-1}}, t_{k-1},\boldsymbol \theta)\Vert]\\
    &\quad+ L\Vert \boldsymbol y_{t_{k-1}} - \boldsymbol y_{t_{k-1}}^\phi\Vert \\
    &\quad+\mathbb{E}_{\boldsymbol y_{t_{k}}\sim p_{t_{k}}}[\Vert\boldsymbol  f(\boldsymbol y_{t_{k-1}}^\phi, t_{k-1},\boldsymbol \theta) - \boldsymbol f(\boldsymbol y_{t_k}, t_k,\boldsymbol \theta)\Vert]\\
    \overset{(\rom 3)}{=}&\mathbb{E}_{\boldsymbol y_{t_{k-1}}\sim p_{t_{k-1}}}[\Vert \boldsymbol g(\boldsymbol y_{t_{k-1}},t_{k-1}) - \boldsymbol f(\boldsymbol y_{t_{k-1}}, t_{k-1},\boldsymbol \theta)\Vert]\\
    &\quad+  L(t_{t_k}-t_{k-1})O(t_{t_k}-t_{k-1})\\
    &\quad+\mathbb{E}_{\boldsymbol y_{t_{k}}\sim p_{t_{k}}}[\Vert \boldsymbol f(\boldsymbol y_{t_{k-1}}^\phi, t_{k-1},\boldsymbol \theta) - \boldsymbol f(\boldsymbol y_{t_k}, t_k,\boldsymbol \theta)\Vert]\\
\end{align*}

Here, (\rom{1}) holds because $\boldsymbol g$ is a consistency function, with $\boldsymbol g(\boldsymbol y_{t_k},t_k)=\boldsymbol g(\boldsymbol y_{t_{k-1}},t_{k-1})$. (\rom{2}) holds because $\boldsymbol f$ satisfies the Lipschitz condition. (\rom{3}) holds because $\Phi$ is an Euler solver, hence $\Vert \boldsymbol y_{t_{k-1}} - \boldsymbol y_{t_{k-1}}^\phi\Vert$ does not exceed the truncation error $O((t_{n}-t_{n-1})^2)$.

\section{Conditional Discriminator}

\begin{theorem}
    Given a generator $G(\boldsymbol z,\boldsymbol x_t,t)$ and a discriminator $D(\boldsymbol x_0,\boldsymbol x_t, t)$. The distribution of optimal solution of $G(\cdot, \boldsymbol x_t,t)$ for the problem \cref{minmax} is $p_g(\cdot|\boldsymbol x_t)=p(\cdot|\boldsymbol x_t)$, where $p_g(\cdot|\boldsymbol x_t)$ is the sample distribution of $G(\boldsymbol z,\boldsymbol x_t,t), z\sim p_{\boldsymbol z}(\boldsymbol z|\boldsymbol x_t)$. $p_{\boldsymbol z}(\cdot|\boldsymbol x_t)$ is a normal distribution. $\boldsymbol x_t\sim p_t$, and $\boldsymbol x_0 \sim p_0$. $p_t$ is the marginal distribution of a diffusion process.
    
\begin{equation}
\begin{split}
\min _{G}& \max _{D} V(G, D)=\mathbb{E}_{\boldsymbol x_0,\boldsymbol x_t \sim p(\boldsymbol x_0,\boldsymbol x_t)}[\log D(\boldsymbol x_0, \boldsymbol x_t)]\\
                    &+\mathbb{E}_{\boldsymbol z \sim p_{\boldsymbol z}(\boldsymbol z|\boldsymbol x_t),\boldsymbol x_t\sim p_t}[\log (1-D(G(\boldsymbol z, \boldsymbol x_t,t),\boldsymbol  x_t))]
\end{split}
\label{minmax}
\end{equation}

\end{theorem}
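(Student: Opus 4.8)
The plan is to follow the classical two-stage argument of Goodfellow et al.\ for the unconditional GAN objective, adapted to the conditional setting where every quantity is indexed by the conditioning variable $\boldsymbol x_t$. The observation that makes this work is that the marginal of $\boldsymbol x_t$ is identical in both terms of $V(G,D)$: in the first term $(\boldsymbol x_0,\boldsymbol x_t)\sim p(\boldsymbol x_0,\boldsymbol x_t)$ has $\boldsymbol x_t$-marginal $p_t$, and in the second term $\boldsymbol x_t\sim p_t$ explicitly. Hence I would first rewrite $V(G,D)$ as a single outer integral over $\boldsymbol x_t$ weighted by $p_t(\boldsymbol x_t)$, with an inner integral over $\boldsymbol x_0$ in which the real term carries the conditional density $p(\boldsymbol x_0\mid\boldsymbol x_t)$ and the fake term carries the generator's conditional density $p_g(\boldsymbol x_0\mid\boldsymbol x_t)$ (using the law of the unconscious statistician to push $\boldsymbol z\sim p_{\boldsymbol z}(\cdot\mid\boldsymbol x_t)$ through $G$ into a sample of $p_g(\cdot\mid\boldsymbol x_t)$).

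Second, for a fixed generator $G$ I would maximize over $D$ pointwise. For each fixed $\boldsymbol x_t$ the inner integrand has the form $a\log D + b\log(1-D)$ with $a=p(\boldsymbol x_0\mid\boldsymbol x_t)$ and $b=p_g(\boldsymbol x_0\mid\boldsymbol x_t)$; elementary calculus (setting the derivative in $D$ to zero) gives the pointwise optimum $D^*(\boldsymbol x_0,\boldsymbol x_t)=\frac{p(\boldsymbol x_0\mid\boldsymbol x_t)}{p(\boldsymbol x_0\mid\boldsymbol x_t)+p_g(\boldsymbol x_0\mid\boldsymbol x_t)}$, exactly as in the unconditional case but now for each value of the conditioning variable.

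Third, I would substitute $D^*$ back into $V$. For each fixed $\boldsymbol x_t$ the inner integral collapses via the standard algebra to $-\log 4 + 2\,\mathrm{JSD}\big(p(\cdot\mid\boldsymbol x_t)\,\big\|\,p_g(\cdot\mid\boldsymbol x_t)\big)$. Integrating against $p_t(\boldsymbol x_t)$ yields $V(G,D^*) = -\log 4 + 2\,\mathbb{E}_{\boldsymbol x_t\sim p_t}\big[\mathrm{JSD}(p(\cdot\mid\boldsymbol x_t)\,\|\,p_g(\cdot\mid\boldsymbol x_t))\big]$. Since the Jensen--Shannon divergence is nonnegative and vanishes exactly when its two arguments coincide, the $p_t$-average is minimized (and equals $-\log 4$) precisely when $p_g(\cdot\mid\boldsymbol x_t)=p(\cdot\mid\boldsymbol x_t)$ for $p_t$-almost every $\boldsymbol x_t$, which is the asserted optimum.

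The argument is essentially bookkeeping layered on top of the standard GAN proof, so I do not anticipate a genuine difficulty; the one step requiring care is the reduction to a problem that is pointwise in $\boldsymbol x_t$. That reduction is legitimate only because the two terms share the identical $\boldsymbol x_t$-marginal $p_t$, which lets the common factor $p_t(\boldsymbol x_t)$ be pulled outside and the maximization over $D$ be interchanged with the outer integral. I would therefore state the matching of marginals explicitly and verify that $\boldsymbol x_t\sim p_t$ is indeed the marginal of the joint $p(\boldsymbol x_0,\boldsymbol x_t)$ before carrying out the interchange.
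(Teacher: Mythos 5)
Your proposal is correct and follows essentially the same route as the paper: rewrite $V(G,D)$ as an outer expectation over $\boldsymbol x_t\sim p_t$ of an inner integral in $\boldsymbol x_0$ involving $p(\boldsymbol x_0\mid\boldsymbol x_t)$ and $p_g(\boldsymbol x_0\mid\boldsymbol x_t)$, solve for the pointwise optimal discriminator $D^*=p/(p+p_g)$, and substitute back to obtain $\mathbb E_{\boldsymbol x_t\sim p_t}[-\log 4+2\,\mathrm{JSD}(p(\cdot\mid\boldsymbol x_t)\,\|\,p_g(\cdot\mid\boldsymbol x_t))]$, minimized exactly when the conditionals coincide. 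Your explicit remark about verifying that both terms share the same $\boldsymbol x_t$-marginal is a point the paper leaves implicit, but the argument is otherwise identical.
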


\begin{proof}
By expressing \cref{minmax} in integral form, we have the following equation:
\begin{align*}
 &\iint_{\boldsymbol x_0,\boldsymbol x_t}p(\boldsymbol x_0,\boldsymbol x_t)\log(D(\boldsymbol x_0,\boldsymbol x_t))d\boldsymbol x_0d\boldsymbol x_t\\
 &+\iint_{\boldsymbol z,\boldsymbol x_t}p_{\boldsymbol z}(\boldsymbol z,\boldsymbol x_t)\log(1-D(G(\boldsymbol z,\boldsymbol x_t),\boldsymbol x_t))d\boldsymbol zd\boldsymbol x_t\\
 =&\int_{\boldsymbol x_t}p_t(\boldsymbol x_t)\left(\int_{\boldsymbol x_0} p(\boldsymbol x_0|\boldsymbol x_t)\log(D(\boldsymbol x_0,\boldsymbol x_t))d\boldsymbol x_0\right. \\
 &+ \left.\int_{\boldsymbol z}p_{\boldsymbol z}(\boldsymbol z|\boldsymbol x_t)\log(1-D(G(\boldsymbol z,\boldsymbol x_t),\boldsymbol x_t))d\boldsymbol z\right)d\boldsymbol x_t\\
 =&\mathbb E_{\boldsymbol x_t\sim p_t}\left[\int_{\boldsymbol x_0} p(\boldsymbol x_0|\boldsymbol x_t)\log(D(\boldsymbol x_0,\boldsymbol x_t))\right.\\
 &+ \left. p_g(\boldsymbol x_0|\boldsymbol x_t)\log(1-D(\boldsymbol x_0,\boldsymbol x_t))d\boldsymbol x_0\right]
\end{align*}

\noindent The optimal $D$ is:
$$
D_G^*=\frac{p(\boldsymbol x_0|\boldsymbol x_t)}{p(\boldsymbol x_0|\boldsymbol x_t)+p_g(\boldsymbol x_0|\boldsymbol x_t)}
$$
\noindent Substituting $D^*$ into $V$, we obtain the following equation:
\begin{align*}
&\max_D V(G,D)\\
=&\mathbb E_{\boldsymbol x_t\sim p_t}\left[ \mathbb E_{\boldsymbol x_0\sim p(\boldsymbol x_0|\boldsymbol x_t)}\left[\log\frac{p(\boldsymbol x_0|\boldsymbol x_t)}{p(\boldsymbol x_0|\boldsymbol x_t)+p_g(\boldsymbol x_0|\boldsymbol x_t)}\right]\right.\\
&+ \left. \mathbb E_{\boldsymbol x_0\sim p_g(\boldsymbol x_0|\boldsymbol x_t)}\log \left[ \frac{p_g(\boldsymbol x_0|\boldsymbol x_t)}{p(\boldsymbol x_0|\boldsymbol x_t)+p_g(\boldsymbol x_0|\boldsymbol x_t)} \right] \right]\\
=&\mathbb E_{\boldsymbol x_t\sim p_t}\left[ -\log 4 + 2 \textit{JSD}(p_t(\cdot|\boldsymbol x_t)||p_g(\cdot| \boldsymbol x_t)) \right]
\end{align*}
In the aforementioned equation, $\textit{JSD}$ represents the Jensen-Shannon divergence. The equation holds true only when $p_g(\cdot|\boldsymbol  x_t)=p(\cdot|\boldsymbol  x_t)$. This concludes the proof.
\end{proof}

\label{condition_discriminator}

\section{ACT-Aug}

\label{sec:act}

In this section, we will provide the details of ACT-Aug. The differences from ACT are highlighted in red. The algorithm is listed in \cref{act_aug}.

\begin{algorithm}[h!]
\caption{Adversarial Consistency Training with Augmentation}
\label{act_aug}
\begin{algorithmic}[1]
\State \textbf{Input:} dataset $\mathcal{D}$, initial consistency model parameter $\theta_g$, discriminator $\theta_d$, step schedule $N(\cdot)$, EMA decay rate schedule $\mu(\cdot)$, optimizer $\text{opt}(\cdot,\cdot)$, discriminator with augmentation \textcolor{red}{$D_{aug}(\cdot,\cdot,\cdot,\theta_d)$}, adversarial rate schedule $\lambda(\cdot)$, gradient penalty weight $w_{gp}$, gradient penalty interval $I_{gp}$, \textcolor{red}{gradient penalty threshold $\tau$}, \textcolor{red}{augmentation probability update rate $p_r$}

\State $\boldsymbol \theta_g^-\leftarrow \boldsymbol \theta$, $k\leftarrow 0$, \textcolor{red}{$p_{aug}\leftarrow 0$ and $\mathcal{L}_{gp}^-=\tau$}
\Repeat

\State Sample $\boldsymbol x\sim \mathcal{D}$, and $n\sim\mathcal{U}[\![1,N(k)]\!]$
\State Sample $\boldsymbol z\sim \mathcal{N}(0,\boldsymbol I)$ \Comment{Train Consistency Model}
\State $\mathcal{L}_{CT}\leftarrow$
\Statex \hspace{\algorithmicindent} \quad $d(\boldsymbol f(\boldsymbol x+t_{n+1}\boldsymbol z,t_{n+1}, \boldsymbol \theta_g), \boldsymbol f(\boldsymbol x+t_n\boldsymbol z,t_n,\boldsymbol \theta_g^-))$
\State $\mathcal L_G\leftarrow\log(1-$
\Statex \hspace{\algorithmicindent} \quad $\textcolor{red}{D_{aug}}(\boldsymbol f(\boldsymbol x+t_{n+1}\boldsymbol z,t_{n+1}, \textcolor{red}{p_{aug}},\boldsymbol \theta_g), t_{n+1}, \boldsymbol \theta_d))$
\State $\mathcal{L}_f\leftarrow(1-\lambda_{N(k)}(n+1))\mathcal{L}_{CT} + \lambda_{N(k)}(n+1)\mathcal L_G$
\State $\boldsymbol \theta_g\leftarrow\text{opt}(\boldsymbol \theta_g,\nabla_{\boldsymbol \theta_g}(\mathcal{L}_f))$
\State $\boldsymbol \theta_g^-\leftarrow\text{stopgrad}(\mu(k)\boldsymbol \theta_g^-+(1-\mu(k))\boldsymbol \theta_g)$

\Statex

\State Sample $\boldsymbol x_g\sim \mathcal{D}$, $\boldsymbol x_r\sim \mathcal{D}$, and $n\sim\mathcal{U}[\![1,N(k)]\!]$
\State Sample $\boldsymbol z\sim \mathcal{N}(0,\boldsymbol I)$\Comment{Train Discriminator}
\State $\mathcal L_D\leftarrow-\log(\textcolor{red}{D_{aug}}(\boldsymbol x_r,t_{n+1}, \textcolor{red}{p_{aug}},\boldsymbol \theta_d))$
\Statex \hspace{\algorithmicindent} \quad $-\log(1-\textcolor{red}{D_{aug}}(\boldsymbol f(\boldsymbol x_g+t_{n+1}\boldsymbol z,t_{n+1}, \textcolor{red}{p_{aug}},\boldsymbol \theta_d))$
\State $\mathcal L_{gp}\leftarrow w_{gp}[k \mod I_{gp}=0]*$
\Statex \hspace{\algorithmicindent} \quad $\Vert\nabla_{\boldsymbol x_r} \textcolor{red}{D_{aug}}(\boldsymbol x_r,t_{n+1}, \textcolor{red}{p_{aug}},\boldsymbol \theta_d)\Vert^2$
\State $\mathcal{L}_d\leftarrow \lambda_{N(k)}(n+1)\mathcal{L}_D + \lambda_{N(k)}(n+1)\mathcal{L}_{gp}$
\State $\boldsymbol \theta_d\leftarrow\text{opt}(\boldsymbol \theta_d,\nabla_{\boldsymbol \theta_d}(\mathcal{L}_d))$
\If {\textcolor{red}{$k\mod I_{gp}=0$}}
\State \textcolor{red}{$p_{aug}\leftarrow$}
\Statex \hspace{\algorithmicindent} \hspace{\algorithmicindent} \quad \textcolor{red}{$ \text{Clip}_{[0, 1]}(p_{aug}+2([\mathcal L_{gp}^- >= \tau] - 0.5)p_r)$}
\State \textcolor{red}{$\mathcal L_{gp}^-=\mu_p\mathcal L_{gp}^-+(1-\mu_p)\mathcal L_{gp}$}
\EndIf
\State $k\leftarrow k+1$
\Until{convergence}
\end{algorithmic}
\end{algorithm}

\section{More Experiment Results}

\label{inpainting_sec}
\textbf{Zero-shot Image Inpainting} ~ An important capability of consistency models is zero-shot image inpainting. This depends on the properties of the diffusion process and $\mathcal L_{CT}$. Given that we introduce a discriminator during the training process, does this impact the properties of consistency models? We demonstrate the results of inpainting in \cref{inpainting}. We employ the algorithm consistent with \cite{Song2023Consistency}. It can be seen that ACT still retains the capabilities of consistency models.

We further display the sampling results from the conditional trajectory $\{\boldsymbol{x}_0 + t_{k}\boldsymbol{z}\}, \boldsymbol x_0\sim p_0,\boldsymbol z\sim \mathcal N ({0,\boldsymbol I})$ on ImageNet 64$\times$64. $k$ ranges from $0$ to $N$, with $10$ equidistant points. It can be observed that the sampling results of $t_k$ and $t_{k-1}$ exhibit significant similarity, which further substantiates that ACT does not disrupt the properties of $\mathcal L_{CT}$ and consistency models.

\begin{figure}[h!]
    \centering
    \includegraphics[width=0.95\columnwidth]{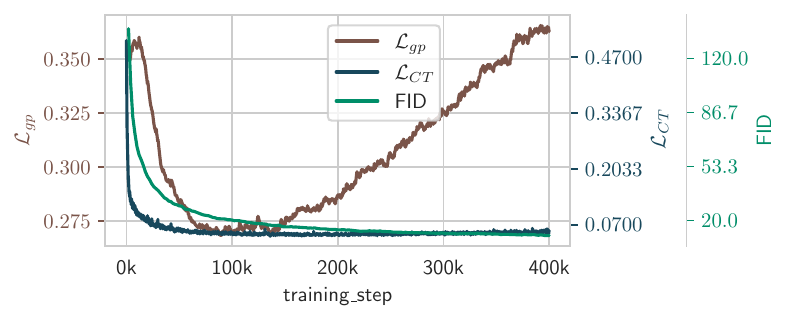}
    \caption{$\mathcal L_{gp}$, $\mathcal{L}_{CT}$, and FID of ACT on ImageNet 64x64 ($w_{mid=0.2}, w=0.6$, a suitable parameter set. Under these parameters, all three metrics demonstrate stability).}
    \label{proper_lambda_fig}
\end{figure}

\begin{figure}[h!]
    \centering
    \includegraphics[width=0.95\columnwidth]{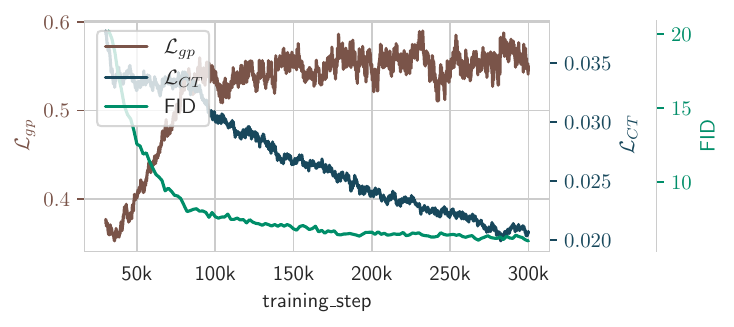}
    \caption{$\mathcal L_{gp}$, $\mathcal{L}_{CT}$, and FID of ACT-Aug on CIFAR10 ($\lambda_N\equiv 0.3$, a suitable parameter set. Under these parameters, all three metrics demonstrate stability).}
    \label{proper_lambda_fig_2}
\end{figure}

\begin{figure}
        \centering 
    \begin{subfigure}{0.95\columnwidth}
        \centering 
        \includegraphics[width=.95\linewidth]{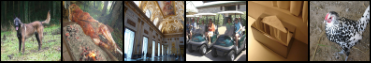}
    \end{subfigure}
    
    \begin{subfigure}{0.95\columnwidth}
        \centering
        \includegraphics[width=.95\linewidth]{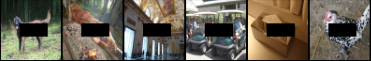}
    \end{subfigure}

    \begin{subfigure}{0.95\columnwidth}
        \centering
        \includegraphics[width=.95\linewidth]{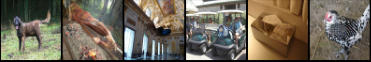}
    \end{subfigure}
    
\caption{The results of zero-shot inpainting. \textbf{First Row:} original images; \textbf{Second Row:} masked images; \textbf{Bottom Row:} inpainted images.}
    \vspace{-3mm}
\label{inpainting}
\end{figure}

\vspace{3mm}
\noindent\textbf{Generation Visualization on Conditional Trajectory} ~ In this section, we demonstrate samples generated from the conditional trajectory $\{\boldsymbol{x}_0 + t_{k}\boldsymbol{z}\}$ on ImageNet 64$\times$64, further illustrating that our method preserves the properties of consistency training. \cref{sample_noise} shows the conditional trajectory $\{\boldsymbol{x}_0 + t_{k}\boldsymbol{z}\}$, while \cref{sample_t_img} displays the samples generated from the conditional trajectory $\{\boldsymbol{x}_0 + t_{k}\boldsymbol{z}\}$. It can be observed that there is a high degree of similarity between adjacent $t$ values, further validating that our method retains the properties of $\mathcal L_{CT}$.

\label{sample_t}

\begin{figure}
    \centering
    \includegraphics[width=0.95\columnwidth]{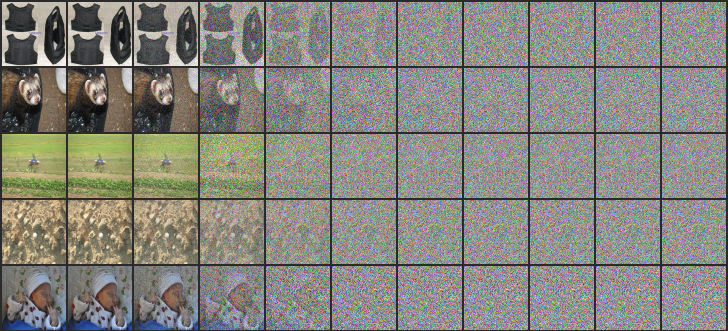}
    \caption{The conditional trajectory $\{\boldsymbol{x}_0 + t_{k}\boldsymbol{z}\}$ (ImageNet 64$\times$64).}
    \label{sample_noise}
\end{figure}

\begin{figure}
    \centering
    \includegraphics[width=0.95\columnwidth]{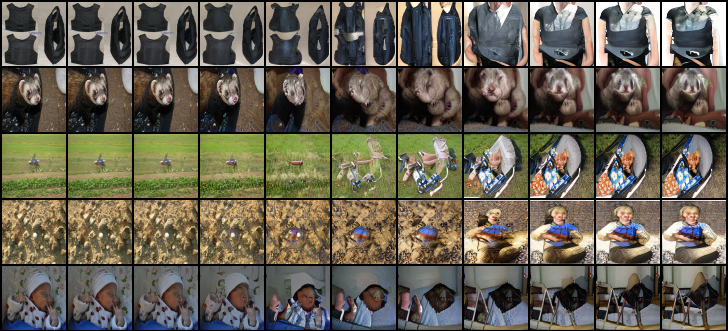}
    \caption{Generated from the conditional trajectory $\{\boldsymbol{x}_0 + t_{k}\boldsymbol{z}\}$ (ImageNet 64$\times$64).}
    \label{sample_t_img}
\end{figure}

\label{proper_lambda}
\vspace{3mm}
\noindent\textbf{Examples of proper \texorpdfstring{$\lambda_N$}{lambda}} ~ In this section, we present the stability of $\mathcal{L}_{CT}$, $\mathcal{L}_{gp}$, and the FID score of the appropriate selection of $\lambda_N$. As depicted in \cref{proper_lambda_fig}, it is observed that all three metrics exhibit stability during training. Specifically for $\mathcal{L}_{gp}$, there is an initial decreasing trend followed by an increase; however, the variation remains within a range of $0.1$ until the end of training. 

\cref{proper_lambda_fig_2} illustrates the stability of $\mathcal{L}_{gp}$, $\mathcal{L}_{CT}$, and the FID score for ACT-Aug under the appropriate selection of $\lambda_N$. It is observed that all three metrics exhibit stability. Furthermore, when compared with ACT on CIFAR10 as shown in \cref{cifar10_noada}, $\mathcal{L}_{gp}$ is stabilized around the set $\tau=0.55$, and both $\mathcal{L}_{CT}$ and the FID score continue to show a decreasing trend. This validates the effectiveness of the augmentation.

\vspace{3mm}
\noindent\textbf{More samples.} ~ \cref{mode_collapse} shows failed generations on CIFAR10 dataset. \cref{cat_1,cat_2} shows more samples on LSUN Cat 256$\times$256 dataset.

\begin{figure}
        \centering 
    \begin{subfigure}[t]{0.235\textwidth}
        \centering 
        \includegraphics[width=.95\linewidth]{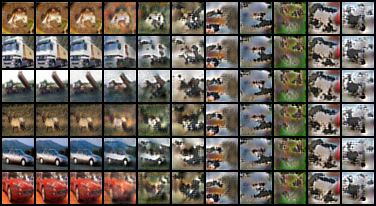}
    \captionsetup{skip=0pt, width=.95\linewidth}
      \subcaption[]{Generated from the conditional trajectory $\{\boldsymbol{x}_0 + t_{k}\boldsymbol{z}\}$.}
        \label{ablation_a}
    \end{subfigure}
    \begin{subfigure}[t]{0.235\textwidth}
        \centering
        \includegraphics[width=.95\linewidth]{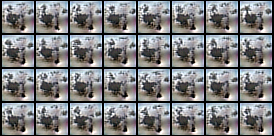}
        \captionsetup{skip=0pt, width=.95\linewidth}
        \subcaption[]{Sampling from $T\boldsymbol{z}$.}
        \label{ablation_b}
    \end{subfigure}
    \vspace{-2mm}
    
\caption{Failed generations. Mode collapse when $\lambda_N\approx 1$. Experiments are conducted on the CIFAR10 dataset.}
    \vspace{-3mm}
        \label{mode_collapse}
\end{figure}

\begin{figure*}
    \centering
    \includegraphics[width=0.95\textwidth]{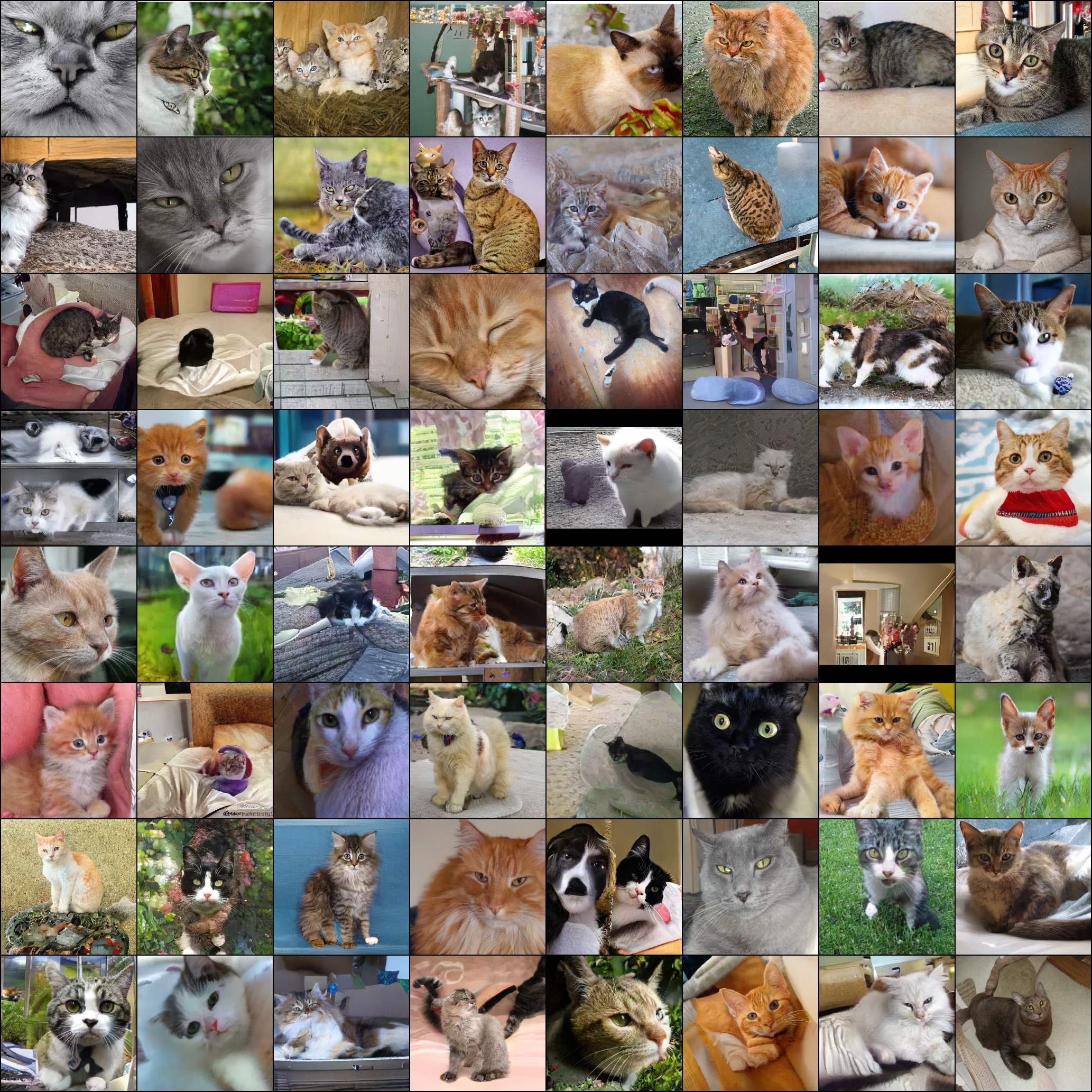}
    \label{cat_1}
\end{figure*}

\begin{figure*}
    \centering
    \includegraphics[width=0.95\textwidth]{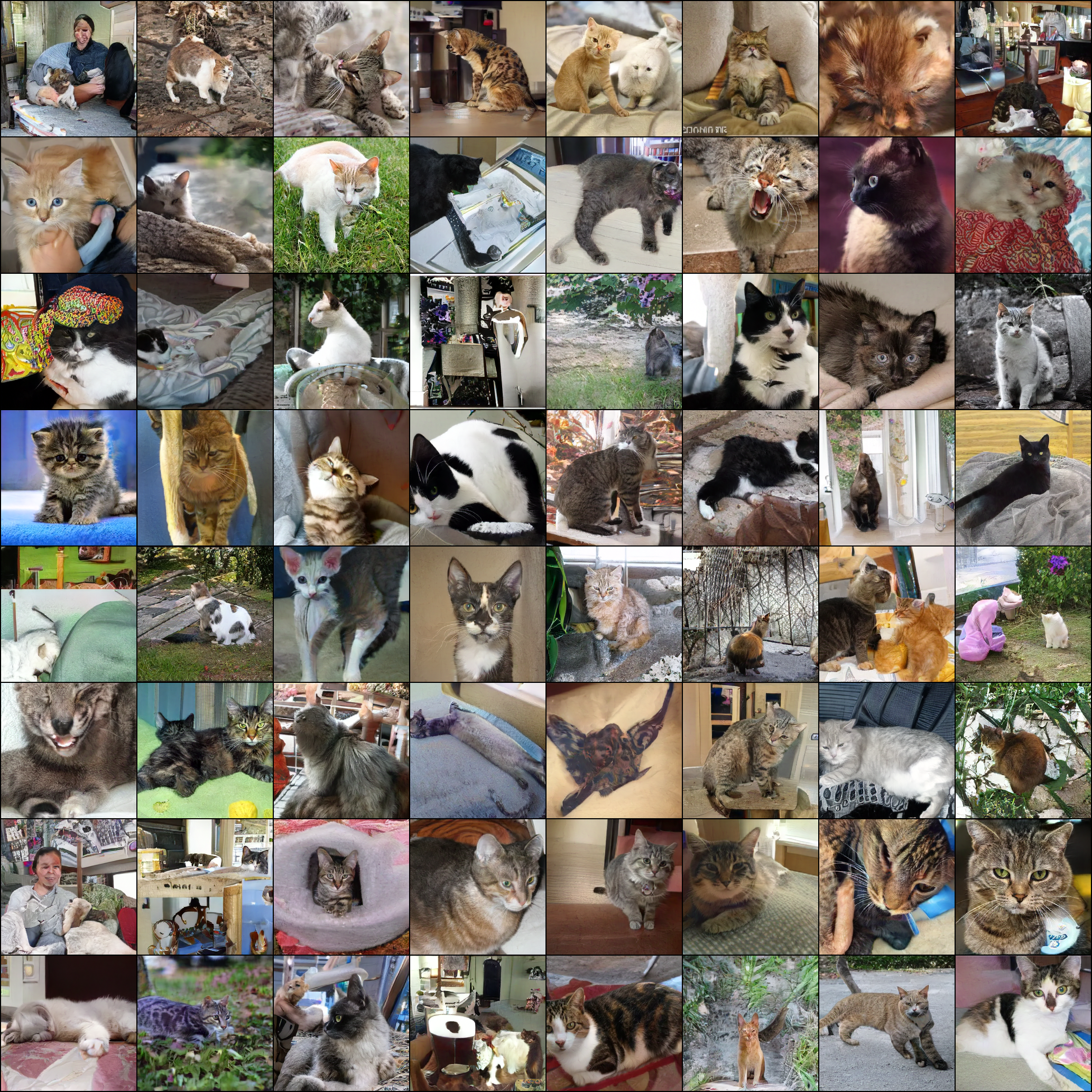}
    \caption{Generated samples (ACT Trained on LSUN Cat 256$\times$256).}
    \label{cat_2}
\end{figure*}

\end{document}